%%%%%%%%%%%%%%%%%%%%%%%%%%%%%%%%%%%%%%%%%%%%%%%%%%%%%%%%%%%%%%%%%%%%%%%%%%%%%%%%
%2345678901234567890123456789012345678901234567890123456789012345678901234567890
%        1         2         3         4         5         6         7         8
% \documentclass[conference]{IEEEtran} 
\documentclass[letterpaper, 10 pt, conference]{ieeeconf}  % Comment this line out if you need a4paper

\IEEEoverridecommandlockouts                              % This command is only needed if 
                                                          % you want to use the \thanks command

% \overrideIEEEmargins                                      % Needed to meet printer requirements.

%In case you encounter the following error:
%Error 1010 The PDF file may be corrupt (unable to open PDF file) OR
%Error 1000 An error occurred while parsing a contents stream. Unable to analyze the PDF file.
%This is a known problem with pdfLaTeX conversion filter. The file cannot be opened with acrobat reader
%Please use one of the alternatives below to circumvent this error by uncommenting one or the other
%\pdfobjcompresslevel=0
%\pdfminorversion=4

% See the \addtolength command later in the file to balance the column lengths
% on the last page of the document

% The following packages can be found on http:\\www.ctan.org
%\usepackage{graphics} % for pdf, bitmapped graphics files
%\usepackage{epsfig} % for postscript graphics files
%\usepackage{mathptmx} % assumes new font selection scheme installed
%\usepackage{times} % assumes new font selection scheme installed
%\usepackage{amsmath} % assumes amsmath package installed
%\usepackage{amssymb}  % assumes amsmath package installed

% \usepackage{times}

% numbers option provides compact numerical references in the text. 
% \usepackage[numbers]{natbib}
\usepackage{cite}
\usepackage{multicol}
\usepackage{algorithm}
\usepackage[noend]{algpseudocode}
% \RestyleAlgo{ruled} 

\usepackage{amssymb}
\usepackage{amsmath}
\usepackage{centernot}
 
\DeclareMathOperator*{\argmax}{argmax}

\usepackage{amsthm}
\newtheorem{definition}{Definition}
\newtheorem{theorem}{Theorem}

\newtheorem{lemma}{Lemma}
\newtheorem{remark}{Remark}

\newtheorem{proposition}{Proposition}

\usepackage{graphicx}
\usepackage{svg}
\usepackage{float}

\usepackage{multirow}

\makeatletter
\let\NAT@parse\undefined
\makeatother
\usepackage{url}
\usepackage[bookmarks=true]{hyperref}

\newcommand{\hquad}{\hspace{0.5em}}
% \newcommand{\qquad}{\hspace{0.25em}}

% analysis of what I achieve with this, some Lemma's
% start with that from the end and work my way up. propagate low-level probabilities to the motion planning layer
% an input Lemma, the output is the guarantee (for each section)
% what is so special about the problem approach? why risk-STL
% now it's a framework, but now I need to concretize it to make the guarantees. 
% low-level lemma goes to high-level lemma (which would be, if a controller exists....) 
% then the controller designed in this form will not violate the spec more than this with this probability
% classical STL formulas and what is the probability that they are satisfied (instead of Risk-STL)

% % \usepackage{algorithm}
% \usepackage{algorithm2e}
% \SetKwComment{Comment}{/* }{ */}
% % \usepackage{algpseudocode}
% \newtheoremstyle{exampstyle}
%   {3pt} % Space above
%   {3pt} % Space below
%   {\itshape} % Body font
%   {} % Indent amount
%   {\bfseries} % Theorem head font
%   {.} % Punctuation after theorem head
%   {.5em} % Space after theorem head
%   {} % Theorem head spec (can be left empty, meaning `normal')
% \theoremstyle{exampstyle} 
% \newtheorem{definition}{Definition}
% \newtheorem{lemma}{Lemma}
% \newtheorem{theorem}{Theorem}
% \newtheorem{remark}{Remark}
\newtheorem{assumption}{Assumption}

\begin{document}

% paper title
% \title{\LARGE \bf Maximally Disturbance-robust Signal Temporal Logic \\ Control Synthesis}
\title{\LARGE \bf Robust STL Control Synthesis under Maximal Disturbance Sets}
% or risk-aware

% You will get a Paper-ID when submitting a pdf file to the conference system
% \author{Author Names Omitted for Anonymous Review. Paper-ID [add your ID here]}

\author{Joris Verhagen$^{1}$, Lars Lindemann$^{2}$ and Jana Tumova$^{1}$
\thanks{This work was partially supported by the Wallenberg AI, Autonomous Systems and Software Program (WASP) funded by the Knut and Alice Wallenberg Foundation.}
\thanks{$^{1}$ Joris Verhagen and Jana Tumova are with the Division of Robotics, Perception and Learning, School of Electrical Engineering and Computer Science, KTH Royal Institute of Technology, Stockholm, Sweden \{\tt\small jorisv, tumova\} @kth.se }
\thanks{$^{2}$Lars Lindemann is with the Thomas Lord Department of Computer Science, University of Southern California, Los Angeles CA, USA \tt\small llindema@usc.edu }
}

\maketitle
\thispagestyle{empty}
\pagestyle{empty}

\begin{abstract}
    % This work addresses the problem of maximally robust control synthesis under unknown disturbances. We consider a general nonlinear system, subject to a Signal Temporal Logic (STL) specification, and wish to synthesize the plan and corresponding controllers that ensure the STL specification is satisfied under the maximal possible bounded disturbance set. Many works have considered STL satisfaction under given bounded disturbances.
    % %and aimed to maximize the robustness of the reference trajectories. 
    % Yet, to the authors' knowledge, this is the first work that aims to maximize the permissible disturbance set and find the corresponding controllers that ensure satisfying the STL specification with maximum disturbance robustness.  
    % We extend on the notion of disturbance-robust semantics for STL, which is a property of a specification, dynamical system, and controller, and provide an algorithm to get the maximal disturbance robust controllers satisfying an STL specification using Hamilton-Jacobi reachability. We show its soundness and provide a simulation example with an Autonomous Underwater Vehicle (AUV).
    This work addresses maximally robust control synthesis under unknown disturbances. 
    We consider a general nonlinear system, subject to a Signal Temporal Logic (STL) specification, and wish to jointly synthesize the maximal possible disturbance bounds and the corresponding controllers that ensure the STL specification is satisfied under these bounds. 
    Many works have considered STL satisfaction under given bounded disturbances. 
    Yet, to the authors' best knowledge, this is the first work that aims to maximize the permissible disturbance set and find the corresponding controllers that ensure satisfying the STL specification with maximum disturbance robustness. 
    We extend the notion of disturbance-robust semantics for STL, which is a property of a specification, dynamical system, and controller, and provide an algorithm to get the maximal disturbance robust controllers satisfying an STL specification using Hamilton-Jacobi reachability. 
    We show its soundness and provide a simulation example with an Autonomous Underwater Vehicle (AUV).
\end{abstract}
% \hspace{4mm}

% \IEEEpeerreviewmaketitle

\section{Introduction}
% \textcolor{red}{add: we present the problem of... takes difficulty of stay-in specifications into account...}
In planning and control with spatial-temporal specifications, such as with Signal Temporal Logic (STL), maximizing robustness is often associated with certain characteristics of the system trajectory.
This is particularly the case for both STL space robustness and time robustness, and any variations of these. For example, space robustness~\cite{donze2010robust,dhonthi2021study} is, in simple terms, defined as to what extent the trajectory can be displaced (e.g. shifted in a 2D plane for planar problems) while still satisfying the specification. If the trajectory can be displaced by e.g., 1 meter, then the space robustness is $\rho = 1$. 
Similarly, time robustness~\cite{donze2010robust,lindemann2022temporal} is, in simple terms, defined as to what extent the trajectory can be shifted in time (e.g. each x and y trajectory gets shifted to the left or right on the time axis) while still satisfying the specification. If the robot can delay or advance the execution of a motion plan by e.g. 1 second, the time robustness is $\theta = 1$.

The limitation of considering these robustness metrics is that they are not related to the dynamics and controllability of the system.
We instead propose synthesizing controllers that are {maximally robust to external disturbances} while satisfying a spatial-temporal specification. \emph{Disturbance robustness}, as we will show, is a property of a specification, a dynamical system, and a controller. 
\\\\
For Autonomous Underwater Vehicles (AUVs), for example, bounding external disturbances like underwater currents is challenging. Overestimating these disturbances can lead to infeasibility or misjudged risks. Our goal is to develop controllers that accommodate the maximum possible disturbances while meeting the STL specification.
% For Autonomous Underwater Vehicles (AUV), for example, it is difficult to bound the possible external disturbances (e.g. due to significant underwater currents). Working with an overapproximation of this bounded disturbance set may lead to infeasibility or an ill-informed perspective on the risks involved. We aim to synthesize the set of controllers that allows for the maximum permissible disturbance acting upon the system while always satisfying the STL specification. 
Consider Fig.~\ref{fig:intro} where an AUV is required to stay in $A$ during the interval $[7,8]$. The top scenario has a higher space robustness (distance within $A$) and time robustness (faster arrival and departure of $A$). The bottom scenario has a higher disturbance robustness due to the reduced surface area exposed to current pushing the AUV towards the rock. 
% In other words, the system can sustain larger disturbances with the motion execution in the bottom scenario.
% We elaborate on this example in the Results section.
\\\\
The novel problem formulation requires simultaneously determining the maximal disturbance set and the controllers that achieve the STL specification. 
We use Hamilton-Jacobi reachability~\cite{bansal2017hamilton}, specifically concatenations of Backwards Reachable Sets (BRSs), to ensure the system can be controlled to satisfy the STL specification under worst-case disturbances from the largest permissible disturbance set.

\begin{figure}[t!]
    \centering
    \includegraphics[width=0.4\textwidth]{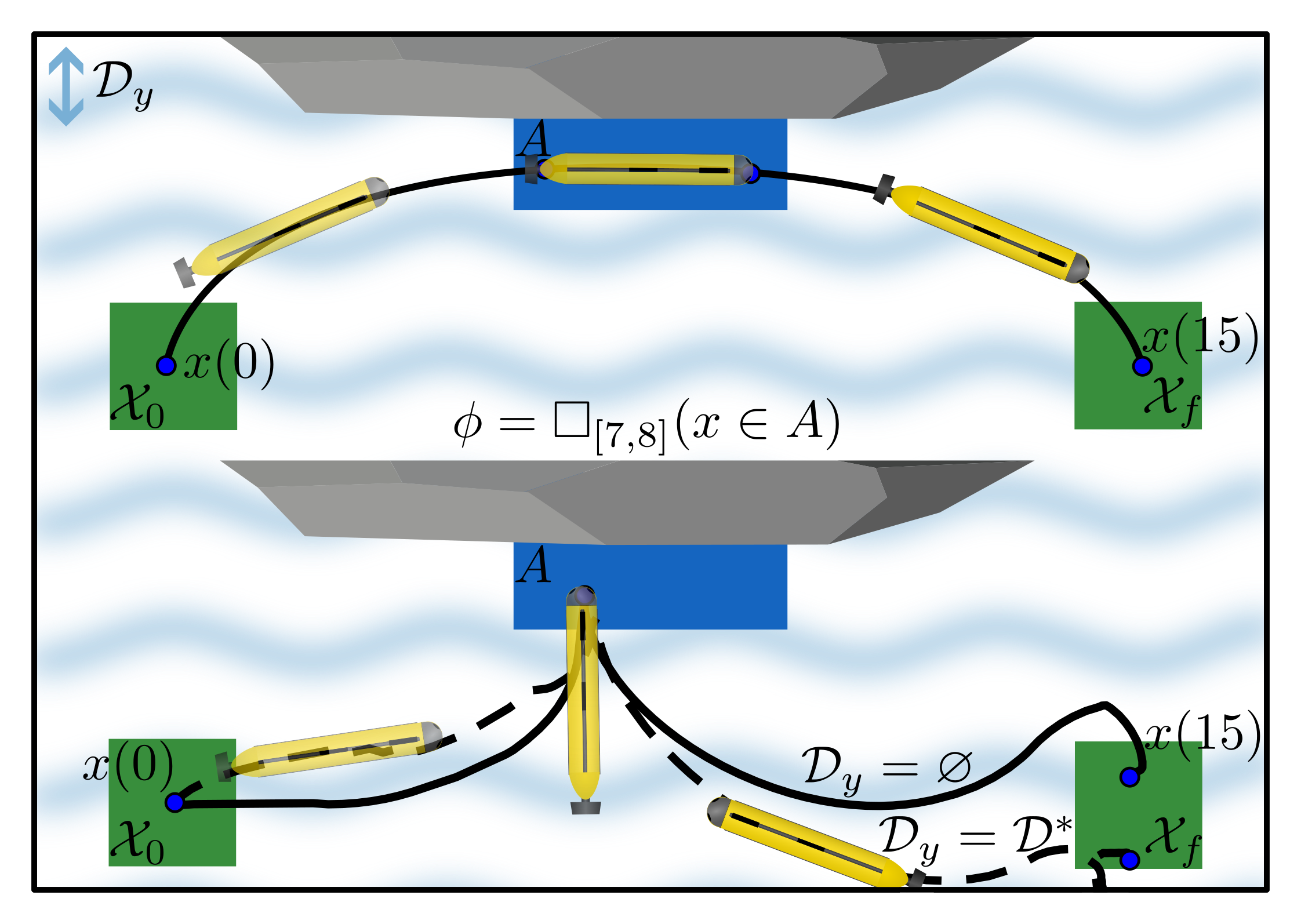}
    \caption{An illustrative reference trajectory (top) and the simulated trajectories from our method (bottom) of an AUV required, to be in $A$ between $7$ and $8$ seconds. The top scenario is more space- and time-robust while the bottom scenario is more disturbance-robust. Solid lines indicate the undisturbed trajectory (using the maximal disturbance robust controllers), the dashed line with the obtained maximal disturbance set.}
    \label{fig:intro}
    \vspace{-0.5cm}
\end{figure}

% A result of the new robustness formulation is that we need to jointly obtain the optimal bound of the external disturbances and the controllers that realize the high-level STL specification.
% To address the dependency between system dynamics and controller, we employ tools from Hamilton-Jacobi reachability~\cite{bansal2017hamilton} which allows us to find backward reachable tubes (BRSs) indicating which initial states are guaranteed to be controllable to a goal set under a worst-case bounded disturbance. To realize satisfaction to an STL specification, we consider the concatenation of BRSs. Using an iterative algorithm, we can systematically parse through the STL specification and obtain the maximum disturbance set that can be accommodated.

\subsection{Contributions}
\label{ssec:contributions}
In this work, we consider a time-independent nonlinear system subjected to a fragment of bounded-time STL. %We aim to find the solution to the STL specification and its corresponding controllers that ensure the system is maximally robust to external disturbances acting upon the dynamics. The contributions of our work are summarized as follows:
\begin{itemize}
    \item We introduce computationally efficient underaproximations of disturbance robustness for a fragment of STL.%, to enable control synthesis.
    \item We present an algorithm to obtain maximally disturbance-robust controllers for this STL fragment.
    \item We prove its soundness and show empirical evidence of the effectiveness of our method in simulation.
\end{itemize}

\subsection{Related Work}
\label{ssec:related_work}
Robust planning and control synthesis under known disturbances has garnered substantial interest. 
Notably,~\cite{sadraddini2015robust,farahani2015robust} employ Model Predictive Control to guarantee maximally satisfying satisfaction of an STL formula in terms of space robustness semantics. Conversely,~\cite{raman2015reactive} presents counterexample-guided control synthesis subjected to bounded disturbances which can also be used in a receding horizon control method. The work in~\cite{majumdar2017funnel} employs Sum Of Squares programming to create a library of robust motion primitives. Integrating this with planning under STL specifications is explored in, for example, ~\cite{barbosa2019integrated, tajvar2020safe}.
In contrast, the works of~\cite{fan2020fast,chen2021scalable} present an innovative method utilizing translational and rotational invariance to construct reachability envelopes for safe motion planning, simplifying the motion planning process into a piecewise linear problem that accounts for dynamics and control, with extensions to space- and time-robust STL in~\cite{sun2022multi,verhagen2023temporally} respectively.

Employing reachability analysis to meet spatio-temporal criteria,~\cite{chen2018signal} establishes the link between reachability analysis and STL, decomposing complex STL formulas into sequences of reachability problems. 
In~\cite{yu2021online} online control synthesis under STL specifications is considered by constructing timed trees of the STL specification and transforming the problem into a tree of tubes. 
Our approach sacrifices some STL expressiveness but, instead of working on a given bounded disturbance set, aims to find the maximum permissible disturbance set while still satisfying the STL specification.

In~\cite{zhang2023investigating}, the problem of finding the largest possible disturbance set is addressed, posed as a controller falsification problem given the dynamical system, the controller, and the STL specification. 
% It aims to assess the robustness of a controller w.r.t. an STL specification. 
The work presented here, to the best of the authors' knowledge, is the first to focus on synthesizing controllers that achieve maximal disturbance robustness while fulfilling STL specifications and employing disturbance robustness as a metric in synthesis.

\section{Preliminaries}
\label{sec:preliminaries}

Let $\mathbb{R}$ and $\mathbb{N}$ be the set of real and natural numbers including zero, respectively, and let $\mathbb{R}_{\geq 0}$ denote the set of real, non-negative numbers. Let $x(t) \in \mathbb{R}^n$ be the $n$-dimensional state of a system at time $t\in \mathbb{R}_{\geq 0}$. Let $\bar{A}$ and $\underline{A}$ denote the upper- and lower-bound respectively of the closed interval $A \subset \mathbb{R}_{\leq0}$ and let $\Delta A = \bar{A} - \underline{A}$ be its range. 

% With some abuse of notation, if $\mu(x): \mathbb{R}^n \rightarrow \mathbb{R}$ is a vector function, then we denote $\mu(\mathcal{X}) = \min(\mu(x\in\mathcal{X}))$ as a set function.

\subsection{System Dynamics and Trajectories}
\label{ssec:system_dynamics}
Consider the following, nonlinear dynamical system
\begin{equation}
\label{eq:dynamics}
    \dot{x} = f(x,u,d),
\end{equation}
where $x \in \mathcal X \subset \mathbb{R}^n$, $u \in \mathcal U \subset \mathbb{R}^m$, and $d \in \mathcal D \subset \mathbb{R}^p$ indicate time, state, control input, and bounded disturbances from the state, input and disturbance sets $\mathcal X, \mathcal U, \mathcal D$, respectively. The set $\mathcal D$ is bounded, but the exact bounds are unknown. Assume we are given a controller of the form $K: \mathbb{R}_{\geq 0} \times \mathcal X \rightarrow \mathcal U$, which gives rise to a closed-loop system, ${\dot{x} = f(x,K(t,x),d)}$.
% \begin{equation}
%     \dot{x}(t) = f(x,K(t,x),d).
% \end{equation}
%Given that the system counteracts disturbances $d \in \mathcal{D}$ with its control input $u = K(t,x)$, 
The system's evolution can be formulated as an interaction between the controller and the disturbance.
%Let $\mathcal{U}$ and $\mathcal{D}$ be the control- and disturbance set from which each respective player may choose its strategy. 
%The controller $K(t,x)$ might be considered as a \emph{strategy}. 
We can then define %the funnel in which all the closed-loop behaviors of the system reside as
\begin{equation}
\label{eq:funnel}
\begin{aligned}
\mathcal{F}(t) & =  \Phi_f(t; \mathcal{X}_0, K, \mathcal{D}) \\ & = \{ x(t) \mid  x(0) \in \mathcal{X}_0,  \dot{x} = f(x, K(t,x), d), \forall d \in \mathcal{D} \},
    % \mathcal{F}(t) := \Phi_f(t;\mathcal{X}_0,K(t,x),\mathcal{D}) \quad \forall t \in \mathcal{T}, d \in \mathcal{D}.
\end{aligned}
\end{equation}
which states that under controller $K(t,x)$, any disturbance $d \in \mathcal{D}$, and $\mathcal{X}_0 \subseteq \mathbb{R}^n$ as initial state set, the evolution of the system will remain in the time-dependent set $\mathcal{F}(t)$ at time $t$.

\subsection{Hamilton-Jacobi-Isaacs Reachability}
\label{ssec:HJ_reachability}
% \textcolor{red}{Needs to flow better and need to double check equations of HJI PDE and control strategy}
%As we are interested in satisfying an STL specification, we wish to define time-dependent sets from which certain reach-avoid properties can be met within certain time bounds. As such, 

A choice of a controller $K$ in Eq.~\eqref{eq:funnel} can ensure specific reach-avoid behavior within certain time bounds regardless of the disturbances. First, let us define the Backwards Reachable Set.

\begin{definition}
Given a target set $\mathcal X_f$, the set of states from which the target can be reached at time $t$ is the Backwards Reachable Set (BRS) defined as 
\begin{equation}
\label{eq:BRS}
\begin{aligned}
    \mathcal{G}(t;t_f, \mathcal{X}_f, \mathcal D) = \{x(t) \mid  \exists u \in \mathcal{U}, \forall d(\cdot) \in \mathcal{D}, \\ \Phi_f(t_f;x,u,d) \in \mathcal{X}_f\},
\end{aligned}
\end{equation}
% where $d(\cdot)$ denotes a disturbance strategy.

% \begin{equation}
% \label{eq:BRS}
% \begin{aligned}
%     \mathcal{G}(t, \mathcal{X}_f, \mathcal D) = \{x : \ & \exists u \in \mathcal{U}, \forall d(\cdot) \in \mathcal{D},
%     \exists t' \in [t,0], \\ & \Phi_f(t';x,u,d) \in \mathcal{X}_f\}
% \end{aligned}
% \end{equation}
%which is the set of states from which the target $\mathcal X_f$ can be reached within duration $t$. 
\label{def:BRS}
\end{definition}
%We denote $u$ here as the \emph{controller player} and $d$ as the \emph{disturbance player}. 
%For further clarity, we shall use $\mathcal{G}(t,\mathcal{X}_f,K(t,x),\mathcal{D})$ to denote the BRS of reaching $\mathcal{X}_f$ with control and disturbance strategy $K(t,x)$ and $\mathcal{D}$.

% When the control strategy $u = K(t,x)$ is given, the BRS is :

% \begin{equation}
% \label{eq:BRS}
% \begin{aligned}
%     \mathcal{G}(t,\mathcal{X}_f,K(t,x)) = \{&x : \forall d(\cdot) \in \mathcal{D}, \\
%     &\exists s \in [t,0], \Phi_f(s;x,K(t,x),d) \in \mathcal{X}_f\}
% \end{aligned}
% \end{equation}

%The interaction between $u$ and $d$ can be assessed by solving a differential game between the two players. 
We pose the challenge of finding a controller $K$ that ensures evolution to $\mathcal{X}_f$ under the worst-case disturbance from $\mathcal{D}$ as a zero-sum game, with a \emph{controller player} and a \emph{disturbance player}.
%If we can formulate a zero-sum objective between the environment and the controller, we can formulate the problem as a \emph{game} between a \emph{controller}- and \emph{disturbance} player. 
% A problem in this form can be addressed with tools from Hamilton-Jacobi-Isaacs reachability. 
The BRS in Eq.~\eqref{eq:BRS} can be obtained by transforming the problem from a ``game of kind" to a ``game of degree" using the \emph{level set method}. We obtain the BRS by solving the Hamilton-Jacobi-Isaacs (HJI) PDE~\cite{evans1984differential}
\begin{equation}
\label{eq:HJI}
\begin{aligned}
    \frac{\partial}{\partial t}V(t,x) + &\max_{u \in \mathcal{U}} \min_{d \in \mathcal{D}}\nabla V(t,x) f(x,u,d) = 0, \\ &V(0,x) = v(x)
\end{aligned}
\end{equation}
which indicates the steady-state solution of the value iteration backward in time of the initial value assigned to the target with the scalar function $v(x) : \mathbb{R}^n \rightarrow \mathbb{R}$ that maps the state to a positive value if $x \in \mathcal{X}_f$ and negative otherwise. For any $x(t) \in \mathcal{G}(t;t_f,\mathcal{X}_f,\mathcal{D})$, the optimal controller is given by
\begin{equation}
\label{eq:opt_control}
    K^*(t,x) = \argmax_{u\in \mathcal{U}}\min_{d\in \mathcal{D}} \nabla V(t,x) f(x,u,d),
\end{equation}
ensuring the objective at time $t_f$ under any disturbance strategy $d \in \mathcal{D}$.

We implicitly assume that the \emph{disturbance player} plays a non-anticipative strategy and has an instantaneous informational advantage, meaning the \emph{disturbance} player can only react to already executed control decisions and that it makes its decision after the control player.
More information on two-player zero-sum differential games and level set methods can be found in~\cite{bansal2017hamilton}.

\subsection{Signal Temporal Logic}
\label{ssec:STL}
Signal Temporal Logic (STL)~\cite{maler2004monitoring} considers real-valued signals making it a powerful tool for specifying and verifying properties of dynamical systems.
Let us first define a fragment of STL over $n$-dimensional, finite, continuous-time signals $\mathbf{x}:\mathbb{R}_{\ge 0} \rightarrow X \subseteq \mathbb{R}^n$.
%, continuous-time 

\begin{definition}[Fragment of Signal Temporal Logic]
\label{def:stl}
    Let bounded time intervals $I$ be in the form $[t_1,t_2]$, where, for all, $I \subset \mathbb{R}_{\ge 0}$, $t_1,t_2 \in \mathbb{R}_{\ge 0}, t_1 \leq t_2$.
    Let $\mu:X \rightarrow \mathbb{R}$ be a real-valued function, and let $p:X \rightarrow \mathbb{B}$ be a predicate defined according to the relation $p(\mathbf{x}) := \mu(\mathbf{x}) \ge 0$. The set of predicates is denoted $\textit{AP}$.
    We consider a fragment of STL, recursively defined as
    \begin{equation*}
    % \label{eq:stl_fragment}
    \begin{aligned}
        % \pi &::= \top \mid p \\
        \psi &::= p \mid \diamondsuit_{I}p \mid \Box_{I} p \mid p_1 \mathcal{U}_{I} p_2 \\
        \phi &::= \psi \mid \phi_1 \land \phi_2 \mid \phi_1 \lor \phi_2 
    \end{aligned}
    \end{equation*}
    The fragment hence allows the bounded {\em Always} ($\Box_I$), {\em Eventually} ($\diamondsuit_I$), and {\em Until} ($\mathcal{U}_I$) operators and allows conjunctions and disjunctions of any combinations of these.
\end{definition}
We refer the reader to~\cite{maler2004monitoring} for the definition of these semantics.
The \emph{Until} operator, $\mathcal{U}_I$, specifies that $p_1$ should hold until, within $I$, $p_2$ holds. The {\em Always} and {\em Eventually} operators require to have a predicate hold for all time $t \in I$ and for some time $t \in I$, respectively. 
We use $\mathcal{X}_p = \{x \mid p(x) = \top \}$ to denote all states that satisfy predicate $p$. 
As $\mu: \mathbb{R}^n \rightarrow \mathbb{R}$ maps vectors in $\mathbb{R}^n$ to real numbers, we extend the definition of $\mu$ to operate on a set of vectors $\mathcal{X} \subseteq \mathbb{R}^n$ as $\mu(\mathcal{X}) = \min\{\mu(x) \mid x \in \mathcal{X}\}$ for notation convenience. 

Additionally, we say that temporal operator $\diamondsuit_{I}p$ has temporal flexibility since $p$ has to be met at an arbitrary $t' \in I$. 
Similarly, $p_1\mathcal{U}_I p_2$ has temporal flexibility since $p_2$ has to be met at an arbitrary $t' \in I$. Resolving temporal flexibility means determining these $t'$ values. 

%and $p_1\mathcal{U}_I p_2$ have temporal flexibility, since $$

%define the notion of \emph{temporal flexibility} for temporal operators for which there exist multiple different times to satisfy the operator; $\diamondsuit_{I}p$ has temporal flexibility as $p(x(t')) \models \diamondsuit_{I}p, \forall t' \in I$. $p_1\mathcal{U}_I p_2$ has temporal flexibility as $p_1(x(\underline{I}:t')) \land p_2(x(t')) \models p_1\mathcal{U}_I p_2, \forall t' \in I$. 

% \begin{definition}[Temporal Flexibility]
%     Temporal operators for which there exist multiple different times to satisfy the operator; $\diamondsuit_{I}p$ has temporal flexibility as $p(x(t')) \models \diamondsuit_{I}p, \forall t' \in I$. $p_1\mathcal{U}_I p_2$ has temporal flexibility as $p_1(x(0:t')) \land p_2(x(t')) \models p_1\mathcal{U}_I p_2, \forall t' \in I$. Resolving temporal flexibility means determining these $t'$ values.
% \end{definition}
% \textcolor{red}{Should this be a definition though?}

% algorithmically
As $\phi$ is a Boolean combination of STL subformulae, it can be transformed into a logically equivalent formula in the disjunctive normal form: ${\phi = \bigvee_{i=1}^M (\bigwedge_{j=1}^{N_i} \psi_{ij})}$.

% \begin{definition}[Bounded-Time Signal Temporal Logic]
% \label{def:stl}
%     Let bounded time intervals $I$ be in the form $[t_1,t_2]$ where $t_2 \geq t_1$.
%     % , where, for all, $I \subset \mathbb{R}_{\ge 0}$, $t_1,t_2 \in \mathbb{R}_{\ge 0}, t_1 \leq t_2$.
%     Let $\mu:X \rightarrow \mathbb{R}$ be a real-valued function, and let $p:X \rightarrow \mathbb{B}$ be a predicate defined according to the relation $p(x) := \mu(x) \ge 0$. The set of predicates is denoted $\textit{AP}$. 
%     We consider a fragment of STL, recursively defined as
%     \begin{equation}
%     \label{eq:stl_fragment}
%         \begin{aligned}
%             \phi := \mu \hquad | \hquad \neg \phi \hquad | \hquad \phi \land \psi \hquad | \hquad \phi \mathcal{U}_I \psi
%         \end{aligned}
%     \end{equation}
%     where $p \in \textit{AP}$. 
% \end{definition}
% The temporal operators \emph{Eventually} and \emph{Always} can be defined as $\diamondsuit_{I} \phi \triangleq \top \mathcal{U}_{I} \phi$ and $\Box_{I} \phi \triangleq \neg \diamondsuit_{I} \neg \phi$. The disjunction operator can be defined as $\phi \lor \psi \triangleq \neg (\neg \phi \land \neg \psi)$
\medskip

The real-valued predicate function $\mu$ allows a quantitative measure of satisfaction of the STL specification. The space-robustness of an STL specification $\phi$ on a reference trajectory $x$ at time $t$ can be recursively computed according to~\cite{maler2004monitoring}
\begin{equation}
\begin{aligned}
\label{eq:space_robustness}
    \rho_p(t,x) &= \mu(x), \\
    % \rho_{\neg \phi}(t,x) &= -\rho_{\phi}(t,x), \\
    \rho_{\Box_I p}(t,x) &= \min_{t'\in I} \rho_{p}(t',x), \\
    \rho_{\diamondsuit_I p}(t,x) &= \max_{t'\in I} \rho_{p}(t',x), \\
    \rho_{p_1 \mathcal{U}_I p_2}(t,x) &= \max_{t_1 \in I+t} \min \{\rho_{p_2}(t,x),\max_{t_2 \in [t,t_1]}\rho_{p_1}(t,x)\}, \\
    \rho_{\phi \land \psi}(t,x) &= \min\{\rho_{\phi}(t,x),\rho_{\psi}(t,x)\}, \\
    \rho_{\phi \lor \psi}(t,x) &= \max\{\rho_{\phi}(t,x),\rho_{\psi}(t,x)\}.
\end{aligned}
\end{equation}
%which allows assigning a quantitative metric to the robustness of a reference trajectory.
\\\\
In~\cite{zhang2023investigating}, the notion of disturbance robustness is introduced for the falsification of a given controller. In contrast to space- and time-robustness of a specification $\phi$, this notion considers robustness of the closed-loop system
\begin{equation}
\begin{aligned}
\label{eq:dist_robust_spec}
    % \delta_{\phi} = \max \quad&|\mathcal{D}|, \\
    % s.t. \quad &\rho_{\phi}(t,\Psi_f(t;x_0,K,d) \geq 0, \quad \forall d \in \mathcal{D}, x_0 \in \mathcal{X}_0
    \delta_{\phi} = \max \quad&|\mathcal{D}|, \\
    \textrm{s.t.} \quad &\rho_{\phi}(t,\Phi_f(t;\mathcal{X}_0,K,\mathcal{D})) \geq 0, 
\end{aligned}
\end{equation}
% \begin{align*}
%     \kappa = \max \quad&d, \\
%     s.t. \quad&\{x\in \mathbb{R}^k \big| \|x\|_{\infty} \leq d\} \subseteq \mathcal{D}. 
% \end{align*} 
which considers finding the maximum set $\mathcal{D}$ (defined later), under a given controller $K(t,x)$, such that any closed-loop evolution under the controller and any disturbance strategy has non-negative space robustness.
\section{Problem Statement}
\label{sec:problem_statement}

% We can now define the problem of Disturbance-Robust Signal Temporal Logic control synthesis. 
First, we define the disturbance-robust satisfaction of an STL specification.

\begin{definition}[Disturbance-Robust STL]
\label{def:dist_robust_STL}
    A control strategy $K(t,x)$ is disturbance robust w.r.t. the disturbance set $\mathcal{D}$ and specification $\phi$ if 
    \begin{equation}
        \Phi_f(t;\mathcal{X}_0,K,\mathcal{D}) \models \phi, \quad \forall t \in [t_0,t_f]
    \end{equation}
    where $t_0$ and $t_f$ are the start- and end time of $\phi$.
\end{definition}

This requires that all solutions of the closed-loop system satisfy the STL specifications under controller $K(t,x)$ against all possible non-anticipative strategies from $d \in \mathcal{D}$.

We can now state the optimization problem that we address in this work; maximal disturbance-robust Signal Temporal Logic control synthesis
\begin{align}
    \delta_{\phi} = \max_{K,\mathcal{D}} \quad &|\mathcal{D}|, \label{eq:prob}\\
    \textrm{s.t.} \quad & \Phi_f(t;\mathcal{X}_0,K,\mathcal{D}) \models \phi, \quad \forall t \in [t_0,t_f]. \tag{9a}\label{eq:prob_c1}
\end{align}

Notice that we jointly optimize for the time-dependent state-feedback controller $K(t,x)$ and the maximum disturbance set. 
% This is an incredibly challenging task due to the time and state dependent adverserial behavior between controller and disturbance, and the history-dependency
This is a challenging task due to the interplay between controller and disturbances and the history-dependency of closed-loop system behavior. As such, we approach this problem via concatenations of reachability problems, which differs from methods for space- and time-robustness.

% The cost in Eq.~\eqref{eq:prob} leaves flexibility for defining in which way the set $\mathcal{D}$ is to be optimized. 
Motivated by the application to real-world robotics, we define the disturbance-robustness degree $\kappa = |\mathcal{D}|$ according~to
\begin{align*}
    \kappa = \max \quad&d_{\infty}, \\
    \textrm{s.t.} \quad&\{y\in \mathbb{R}^k \big| \|y\|_{\infty} \leq d_{\infty}\} \subseteq \mathcal{D}. 
\end{align*} 
In other scenarios, one might wish to displace this origin and scale the dimensions of the disturbance set differently.

\begin{remark}
    The disturbance-robust metric in Eq.~\eqref{eq:prob} is implicitly w.r.t. unknown disturbance bounds. One can additionally consider positive and negative disturbance-robustness by considering the distance to a known, approximated, or learned disturbance set, $\bar\delta_{\phi} = \mathit{dist}(\mathcal{D},\mathcal{D}_{\mathit{known}})$
    % This can however be defined by considering maximally disturbance-robust control synthesis w.r.t. a true disturbance set, $\bar\delta_\phi = dist(\mathcal{D},\mathcal{D}^*)$.
\end{remark}

\section{Disturbance-Robust Semantics}
\label{sec:disturbance_robust_semantics}

In contrast to existing notions of space- and time-robustness, it follows from the dependency on closed-loop behavior in Eq.~\eqref{eq:prob_c1} that disturbance-robustness of a specification is history-dependent and cannot be obtained by evaluation at just the intervals of the temporal operators as is done for space-robustness in Eq.~\eqref{eq:space_robustness}.
% However, the approach taken in~\cite{zhang2023investigating} does not aid in an efficient solution of Eq.~\eqref{eq:prob} as it defines disturbance robustness over the entire specification $\phi$. 
As such, we first wish to expand upon the existing notion of specification disturbance robustness in Eq.~\eqref{eq:dist_robust_spec}.
\\\\
Consider an STL specification in the fragment defined in Def.~\ref{def:stl}, given in its disjunctive normal form $\phi = \bigvee_{i=1}^M (\bigwedge_{j=1}^{N_i} \psi_{ij}) = \bigvee_{i=1}^N \Psi_i$. Each subformula $\Psi_i$ represents a possible way to achieve $\phi$. 
To address problem~\eqref{eq:prob}, we state the following additional assumptions.
\begin{assumption}[Bounded Predicates]
\label{ass:predicates}
    % \textcolor{red}{permanent, feels very fundamental to reachability computations}
    Predicates $p(x)$ in specification $\phi$ are closed bounded sets of the state $x$ or conjunctions and disjunctions thereof
    \begin{equation*}
        p(x) = \top \implies x \in (\bigcap_{i} S_i) \cup (\bigcup_{j} S_j)
        % \exists B : p(x) = \top \implies x \in B,
    \end{equation*}
    for bounded sets $S_i,S_j \subset \mathbb{R}^n$.
\end{assumption}

Furthermore, for the sake of presentation simplicity, we state the following assumption. %, but we discuss later, in Sec.~\ref{sec:conclusions} how this can be relaxed.
\begin{assumption}[Non-overlapping Intervals]
\label{ass:intervals}
    % \textcolor{red}{want to get rid of this somehow but it feels like I need to treat every case differently...}
    All $N_i$ operators within subformula $\Psi_i$ have non-overlapping intervals $I$
    \begin{equation*}
        I_{i,j} \cap (\bigcup_{j\neq k} I_{i,k}) = \varnothing, \quad \forall j,k \in \{1,...,N_i\}.
    \end{equation*}
\end{assumption}

Based on the fragment in Def.~\ref{def:stl} and these assumptions, we can state some observations regarding operator-based disturbance-robust semantics. First, consider the definition of the disturbance robustness of a predicate $p$ at any time $t$

\begin{equation}
\label{eq:delta_p}
\begin{aligned}
    &\hat\delta_{p}(K,\mathcal{X}_0,t_0,t) := \\
    &\argmax_{|\mathcal{D}|} \mu(\Phi_f(t-t_0,\mathcal{X}_0,K,\mathcal{D})) \geq 0, 
\end{aligned}
\end{equation}
w.r.t the set of initial states $\mathcal{X}_0$, initial time $t_0$, and dynamics and controller $f$ and $K$. In contrast to the definition in Eq.~\eqref{eq:dist_robust_spec} from~\cite{zhang2023investigating}, this definition allows us to consider the disturbance-robustness of a single predicate as opposed to the whole specification $\phi$. 
For the temporal operators, we make the following observations

\begin{equation}
\label{eq:delta}
\begin{aligned}
    % \delta_{\Box_I p}(K,x_0,t_0) &:= \min_{t' \in I}(\delta_{p}(K,x_0,t_0,t')), \\
    % \delta_{p_1 \mathcal{U}_I p_2}(K,x_0,t_0) &:= \max_{t_1\in I} \min\{ \delta_{p_2}(K,p_1,t_1-t_2), \\ &\quad\quad \min_{t_2\in [0,t]}(\delta_{p_1}(K,x_0,t_2-t_0)) \}, \\
    \hat\delta_{\Box_I p}(K,\mathcal{X}_0,t_0) &:= \min\{\hat\delta_p(K,\mathcal{X}_0,t_0,\underline{I}), \\ &\quad\quad \min_{t'\in I}\hat\delta_p(K,\mathcal{X}_p,\underline{I},t')\}, \\
    \hat\delta_{\diamondsuit_I p}(K,\mathcal{X}_0,t_0) &:= \max_{t' \in I}(\hat\delta_{p}(K,\mathcal{X}_0,t_0,t')), \\
    \hat\delta_{p_1 \mathcal{U}_I p_2}(K,\mathcal{X}_0,t_0) &:= \min\{ \hat\delta_{p_1}(K,\mathcal{X}_0,t_0,\underline{I}), \\ \max_{t' \in I} (\min&(\hat\delta_p(K,\mathcal{X}_{p_1},\underline{I},t'), \hat\delta_{p_2}(K,\mathcal{X}_{p_1},t',t'))) \} \\
    \hat\delta_{\psi_1 \land \psi_2}(K,\mathcal{X}_0,t_0) &:= \min(\hat\delta_{\psi_1}(K,\mathcal{X}_0,t_0), \hat\delta_{\psi_2}(K,\mathcal{X}_{p_1},t_1')), \\
    \hat\delta_{\psi_1 \lor \psi_2}(K,\mathcal{X}_0,t_0) &:= \max(\hat\delta_{\psi_1}(K,\mathcal{X}_0,t_0),\hat\delta_{\psi_2}(K,\mathcal{X}_0,t_0)).
\end{aligned}
\end{equation}
% \textcolor{red}{Add why $\Box$ looks so different.}
Here, $\hat\delta_{\Box_I p}$ explicitly considers the minimum disturbance-level to go from $\mathcal{X}_0$ to $\mathcal{X}_p$ (via $\hat\delta_p(K,\mathcal{X}_0,t_0,\underline{I})$) and to stay in $\mathcal{X}_p$ (via $\min_{t'\in I}\hat\delta_p (K,\mathcal{X}_p,\underline{I},t')$).
% Here, $\hat\delta_{\Box_I p}$ explicitly considers the stay-in objective for duration $I$. 
Note that $\hat\delta_{\psi_1 \land \psi_2}$ explicitly considers the going from $\mathcal{X}_0$ to $\mathcal{X}_{p_1}$ and $\mathcal{X}_{p_1}$ to $\mathcal{X}_{p_2}$.
These observations explicitly consider the sequence of temporal operators due to the history dependence of disturbance robustness.

\begin{lemma}[Disturbance-Robust Recursive Semantics]
    A specification $\phi$ from Def.~\ref{def:stl} with Assumptions~\ref{ass:predicates} and~\ref{ass:intervals}. The disturbance-robust recursive semantics on this fragment from Eq.~\eqref{eq:delta_p} and~\eqref{eq:delta} constitute an underapproximation of the maximal disturbance set $\mathcal{D}$ for which $\Phi_f(t;x_0,K,\mathcal{D}) \models \phi, \forall x_0 \in \mathcal{X}_0$, e.g. $\hat\delta_{\phi} \leq \delta_{\phi}$. 
\end{lemma}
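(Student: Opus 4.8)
The plan is to proceed by structural induction on $\phi$ in its disjunctive normal form, showing at each step that the recursively defined quantity is a \emph{sound} disturbance level, i.e. that the closed-loop funnel actually satisfies the corresponding subformula whenever the disturbance magnitude is at most the computed value. Since $\delta_{\phi}$ is by definition the maximal such level, establishing that satisfaction holds at $\hat\delta_{\phi}$ immediately yields $\hat\delta_{\phi} \leq \delta_{\phi}$. The engine of the whole argument is a monotonicity property of the predicate-level quantity: if $\mathcal{X}_0 \subseteq \mathcal{X}_0'$, then $\hat\delta_p(K,\mathcal{X}_0',t_0,t) \leq \hat\delta_p(K,\mathcal{X}_0,t_0,t)$. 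This holds because $\mu$ is extended to sets as a minimum over the reachable funnel, so enlarging the initial set can only enlarge $\Phi_f(\cdot)$ and thus lower the worst-case predicate value, forcing a smaller admissible disturbance magnitude. I would prove this monotonicity first, as every composite rule leans on it.

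For the base case, a single predicate $p$, Eq.~\eqref{eq:delta_p} is exactly the largest disturbance magnitude keeping the entire funnel inside $\mathcal{X}_p$ at time $t$, so the bound holds with equality. In the inductive step I would treat each operator of Eq.~\eqref{eq:delta} separately. For $\Box_I p$, the rule takes the minimum of the robustness of first driving $\mathcal{X}_0$ into $\mathcal{X}_p$ by time $\underline{I}$ and of remaining in $\mathcal{X}_p$ throughout $I$; any disturbance below both terms guarantees both reaching and staying, hence satisfaction of the always-operator. Conservativeness — and hence the inequality rather than equality — enters precisely because the ``stay'' term uses the full predicate set $\mathcal{X}_p$ as its initial set rather than the true reachable set $\mathcal{F}(\underline{I}) \subseteq \mathcal{X}_p$; by monotonicity this can only lower the computed value. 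The $\diamondsuit_I p$ rule takes a max over candidate satisfaction times $t' \in I$: committing the entire funnel to satisfy $p$ at one fixed $t'$ is feasible but suboptimal against the true semantics, which permit different trajectories to satisfy $p$ at different times, so the max over fixed times underapproximates. The until and Boolean rules follow the same template — the conjunction again substituting $\mathcal{X}_{p_1}$ for the true reachable set $\mathcal{F}(t_1')$ and invoking monotonicity, and the disjunction taking a max because committing to whichever disjunct achieves the larger robustness certifies $\phi$ at that level.

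I would then make the role of the assumptions explicit. Assumption~\ref{ass:predicates} guarantees each $\mathcal{X}_p$ is a well-defined bounded set, so the funnel-containment arguments are meaningful, while Assumption~\ref{ass:intervals} ensures that within a conjunctive clause $\Psi_i$ the temporal operators act on disjoint time windows, so the sequential ``reach, then hand off the predicate set as the next initial set'' chaining is not corrupted by temporal overlap that would break the decomposition.

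The main obstacle I anticipate is rigorously justifying the history-dependent substitutions, namely that replacing the true reachable funnel $\mathcal{F}(\underline{I})$ (for the always ``stay'' term) or $\mathcal{F}(t_1')$ (for the conjunction) by the larger predicate set is sound rather than merely plausible. The crux is to verify the containment $\mathcal{F}(\cdot) \subseteq \mathcal{X}_p$ whenever the preceding stage is satisfied at the computed level, and then to apply the monotonicity lemma to conclude that the computed value lower-bounds the true one. Secondary care is needed at the $\max_{t'}$ in the eventually and until rules to confirm that a single-time satisfaction strategy is genuinely realizable by the closed-loop funnel under the non-anticipative worst-case disturbance of Eq.~\eqref{eq:opt_control}.
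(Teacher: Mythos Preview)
Your proposal is correct and, in fact, considerably more complete than what the paper offers. The paper does not give a full proof but only a \emph{proof sketch}: it considers the single example $\phi = \Box_{I_1}p_1 \land \Box_{I_2}p_2$, observes that the recursive rule forces \emph{every} state in $\mathcal{X}_{p_1}$ to be drivable into $\mathcal{X}_{p_2}$, and then remarks that the true optimum may tolerate a larger disturbance because the actual reachable set at $\bar I_1$ can be a strict subset of $\mathcal{X}_{p_1}$. It closes with the one-line summary ``robustness is defined over entire sets; there might exist less conservative trajectories within those sets.''

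You have identified exactly this mechanism --- the replacement of the true funnel $\mathcal{F}(\cdot)$ by the coarser predicate set $\mathcal{X}_p$ --- but you elevate it to a reusable monotonicity lemma and run a structural induction over all operators in the fragment, rather than exhibiting one case. That buys you an actual proof instead of an illustration, and it also makes explicit why the $\diamondsuit$ and disjunction rules are sound (points the paper's sketch does not touch). The trade-off is that the paper's sketch is shorter and conveys the intuition immediately, whereas your argument requires the reader to track the induction and the containment $\mathcal{F}(\underline I)\subseteq\mathcal{X}_p$ at each hand-off. Both routes rest on the same idea; yours is the rigorous version of what the paper only gestures at.
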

\begin{proof}[Proof Sketch]
    Consider $\phi = \Box_{I_1}p_1 \land \Box_{I_2}p_2$ where $\bar{I}_1 \leq \underline{I}_2$ and we have obtained $\mathcal{D}^*$. 
    $\hat\delta_{\Box_1 \land \Box_2}$ requires that $x(0) \in \mathcal{X}_0 \implies x(t) \in \mathcal{X}_{p_1} \forall t \in I_{1}, \forall d \in \mathcal{D}$ and $x(\bar{I}_1)\in \mathcal{X}_{p_1} \implies x(t) \in \mathcal{X}_{p_2} \forall t \in I_{2}, \forall d \in \mathcal{D}$. 
    Consider however that there might exists a $K(t,x)$ and $(\mathcal{D}_1,\mathcal{D}_2)$ for which $\mathcal{D}_1 \cap \mathcal{D}_2 \supset \mathcal{D}^*$ such that $x(0) \in \mathcal{X}_0 \implies x(t) \in \mathcal{X}_{p_1} \forall t \in I_1 \land x(t) \in \mathcal{X}_{p_2} \forall t \in I_2$ where $\exists x(\bar{I}_1) \in \mathcal{X}_{p_1}$ for which $x(\underline{I}_2) \not\in \mathcal{X}_{p_2}$.
    In short, robustness is defined over entire sets. There might exists less conservative trajectories within those sets.
\end{proof}

% \begin{proof}[Proof Sketch]
%     Assume $\phi$ is in disjunctive normal form. 
%     % Then, for each temporal operator, $\forall t \in I$, $\delta_p$ denotes the maximum disturbance set $\mathcal{D}$ bringing you from any $x_0$ to the closed predicate $p$. 
%     For $\Box_I p$, note that $\delta_{\Box_I p}$ from Eq.~\eqref{eq:delta} considers the minimum of the disturbance sets that can a) transition from any initial state $x_0$ to satisfy predicate $p$ and b) maintain satisfaction of $p$ over the interval $\Delta I$. 
%     Conversely, $\delta_{\diamondsuit_I p}$ identifies the largest disturbance set under which, at any point $t'\in I$ ensures satisfaction of $p$. 
%     For $\delta_{p_1 \mathcal{U}_I p_2}$, we consider the minimum of the disturbance sets that can a) satisfy predicate $p_1$ at $t=\underline{I}$ and b) transition to $p_2$ at some $t' \in I$ while satisfying $p_1, \forall t\in[\underline{I},t']$.
%     % For $p_1 \mathcal{U}_I p_2$, we consider the arrival at $p_1$ at $t = \underline{I}$ and the switch from $p_1$ to $p_2$ at any point $t' \in I$. 
%     Assumption~\ref{ass:intervals} ensures that for conjunctions the semantic in Eq.~\eqref{eq:delta} can consider consecutive predicates.
%     The disjunctions $\lor$ are trivial for the disjunctive normal form. 
% \end{proof}

The concatenation of disturbance robust temporal operators via $\hat\delta_{\phi_1 \land \phi_2}$ motivates us to similarly consider the concatenation of backward reachable sets to solve Eq.~\eqref{eq:prob}.
% This history dependency motivates our iterative concatenation of reachability tubes in order to solve Eq.~\eqref{eq:prob}.
\section{Disturbance-robust Synthesis}
\label{sec:disturbance_robust_synthesis}
% \textcolor{red}{Mention that we use BRSs instead of BRTs because for bounded predicates it is not trivial to satisfy the predicate for infinite time.}
% \textcolor{red}{Add $j$ is after $j+1$, add that we use $\psi_i$ instead of $\psi_{ij}$.}
Finding the true maximally disturbance-robust hybrid controller for a dynamical system subject to an STL specification is intractable due to the history-dependent nature of reachable sets. As such, we aim to find an approximate solution in a discrete search space where we asses sequentially decreasing disturbance sets. The fragment of STL in Def.~\ref{def:stl}, the assumptions~\ref{ass:predicates} and~\ref{ass:intervals}, and the newly established disturbance-robust semantics in Eq.~\eqref{eq:delta} allow us to do this efficiently.
We also make use of the following proposition.
%\begin{definition}
%\label{def:path}
%    A path is defined as being a valid parsing of the STL specification that only contains temporal operators and conjunctions (no disjunctions).
%\end{definition}
% \textcolor{red}{Add that predicates with conjunctions or disjunctions is really straightforward, as long as each is a bounded subset. Disjunctions in predicate do not lead to extra subformulae.}

%Assumption~\ref{ass:intervals} can be relaxed but it would complicate the algorithm and subsequent soundness analysis. 
% For the sake of simplicity, we hence assume non-overlapping time intervals.

% \begin{figure}[t!]
%     \centering
%     \includegraphics[width=0.45\textwidth]{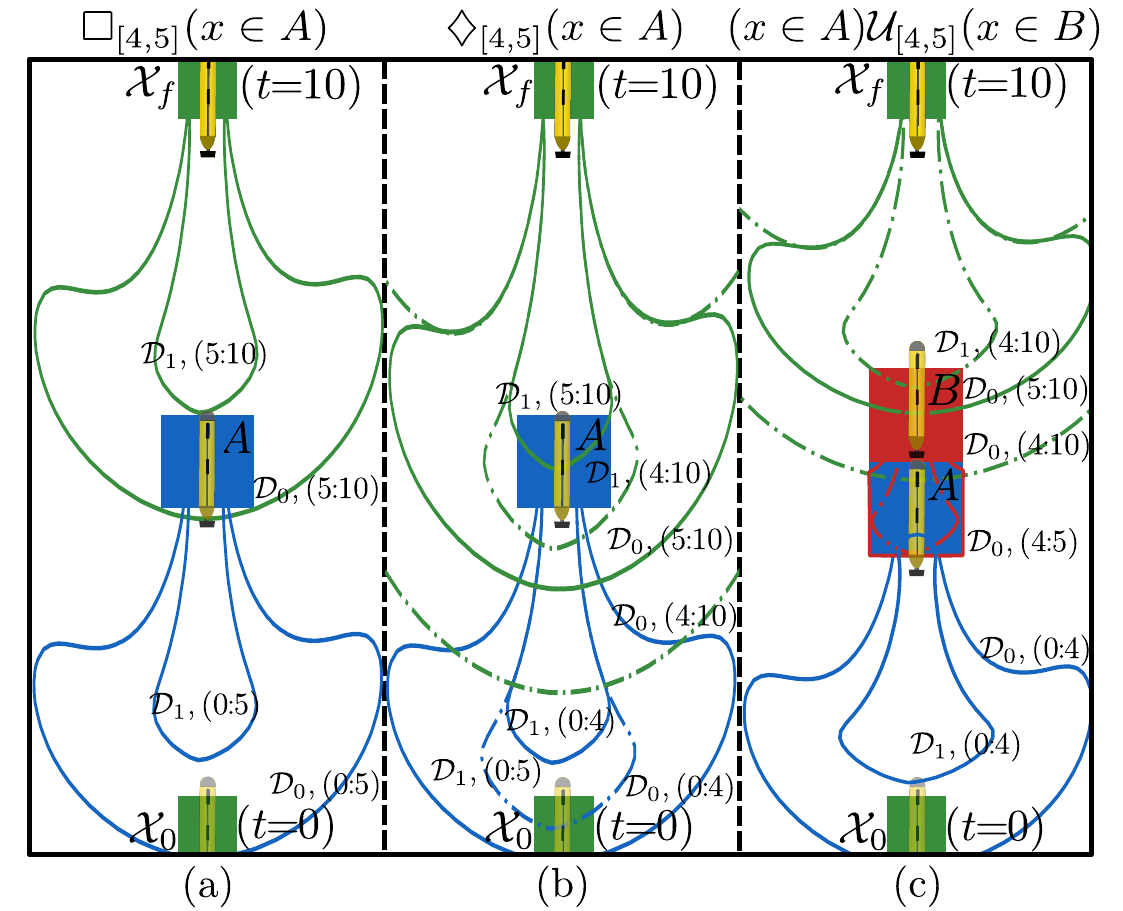}
%     \caption{Concatenation of reachable sets to satisfy a simple reach specification. $A \leftarrow \mathcal{X}_f$ can be met with $\mathcal{D}_0$ within the time bound while $\mathcal{X}_0 \leftarrow A$ can be met with $\mathcal{D}_1$. The resulting maximum disturbance-robustness of the system and controller is $\delta_{\phi} = \min\{|\mathcal{D}_0|,|\mathcal{D}_1|\}$.}
%     \label{fig:algorithm}
%     \vspace{-0.5cm}
% \end{figure}
\begin{figure}[t!]
    \centering
    \includegraphics[width=0.45\textwidth]{}
    % \caption{Concatenation of reachable sets for the \emph{Always}, \emph{Eventually}, and \emph{Until} operators. The \emph{Eventually} and \emph{Until} operators have temporal flexibility, hence we consider the BRSs at different points within the interval $I$ for any disturbance-set $\mathcal{D}_i$.}
    \caption{Concatenation of reachable sets. $(t_1,t_2)$ indicates between which times the optimal controller corresponding to the BRS will be active. a) for \emph{Always}, the reachable sets should intersect at $t_5$ and contain a stay-in reachability computation from $t_5$ to $t_4$ (blue). The only valid path is $(\mathcal{D}_0,0\mathrel{\!\!:\!\!}5),(\mathcal{D}_0,5\mathrel{\!\!:\!\!}10)$, b) for \emph{Eventually}, $A$ can be met at $t_4$ or $t_5$ (solid and dashed lines). The optimal valid path is $(\mathcal{D}_0,0\mathrel{\!\!:\!\!}4),(\mathcal{D}_1,4\mathrel{\!\!:\!\!}10)$, $\mathcal{D}^* = \mathcal{D}_0$, c) for \emph{Until}, we consider three set concatenations. Blue sets need strict satisfaction at $\underline{I}$, red ensures $A\rightarrow B$ for any $t\in I$ and green ensures $B\rightarrow \mathcal{X}_f$ for that same $t$.}
    \label{fig:algorithm}
    \vspace{-0.5cm}
\end{figure}

\begin{proposition}
\label{lem:shifting}
    The time-invariant dynamics and predicates in Eq.~\eqref{eq:dynamics} and Def.~\ref{def:stl} respectively, permit time-shifting Backward Reachable Sets (BRSs). Therefore, a BRS ensuring predicate satisfaction at $t=t'$ starting at $t=t_0, t'\geq t_0$ can equivalently ensure satisfaction at $t=t''$ starting at $t=t_0+(t''-t'), \forall t_0,t',t'' \in \mathbb{R}_{\geq 0}$.
\end{proposition}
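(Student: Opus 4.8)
The plan is to exploit the fact that neither the dynamics $f$ in Eq.~\eqref{eq:dynamics} nor the predicate sets in Def.~\ref{def:stl} carry any explicit time dependence, so that the entire reach--avoid computation in Def.~\ref{def:BRS} depends only on the elapsed duration $t_f - t$ and not on the absolute clock value. Concretely, I would first establish a shift-invariance property at the level of individual trajectories, then lift it to the flow map $\Phi_f$ and hence to the BRS $\mathcal{G}$, and finally specialize the shift amount to $\tau = t'' - t'$ to recover the claimed equivalence.

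For the trajectory-level step, fix $\tau \in \mathbb{R}_{\geq 0}$ and suppose $x(\cdot)$ solves $\dot{x} = f(x, u(\cdot), d(\cdot))$ on $[t_0, t']$ from a given initial state $x_0$. Define the shifted signals $\tilde{x}(s) := x(s - \tau)$, $\tilde{u}(s) := u(s - \tau)$ and $\tilde{d}(s) := d(s - \tau)$. By the chain rule and the time-invariance of $f$,
\begin{equation*}
    \dot{\tilde{x}}(s) = \dot{x}(s - \tau) = f\big(x(s-\tau), u(s-\tau), d(s-\tau)\big) = f\big(\tilde{x}(s), \tilde{u}(s), \tilde{d}(s)\big),
\end{equation*}
so $\tilde{x}(\cdot)$ is a solution on $[t_0 + \tau, t' + \tau]$ with the same initial state $x_0$ and the same terminal state $x(t')$. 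Hence the state reached over $[t_0, t']$ under $(u,d)$ is identical to that reached over $[t_0 + \tau, t' + \tau]$ under $(\tilde{u}, \tilde{d})$. Since the target $\mathcal{X}_f = \mathcal{X}_p$, and indeed every predicate set in Def.~\ref{def:stl}, is defined purely in terms of the state $x$ and carries no time stamp, the membership test $\Phi_f(\cdot) \in \mathcal{X}_f$ is invariant under this shift. Substituting into Eq.~\eqref{eq:BRS} yields $\mathcal{G}(t_0; t', \mathcal{X}_f, \mathcal{D}) = \mathcal{G}(t_0 + \tau; t' + \tau, \mathcal{X}_f, \mathcal{D})$, and setting $\tau = t'' - t'$ gives precisely the stated identity between a BRS targeting $t'$ from start time $t_0$ and one targeting $t''$ from start time $t_0 + (t'' - t')$.

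The main obstacle is the differential-game structure underlying the BRS: the disturbance is not an open-loop signal but a non-anticipative strategy reacting to the control, so signal-level shift-invariance does not by itself transfer to the value of the game. I would resolve this by noting that the time-shift $s \mapsto s - \tau$ induces a bijection on the set of non-anticipative strategies that preserves the causal ordering between the controller and disturbance players, leaving the worst-case outcome unchanged. Equivalently, and more cleanly, the Hamiltonian $\max_{u \in \mathcal{U}} \min_{d \in \mathcal{D}} \nabla V(t,x)\, f(x,u,d)$ in the HJI PDE~\eqref{eq:HJI} contains no explicit $t$ and the terminal data $v(x)$ depends only on $x$; uniqueness of the viscosity solution then forces $V(t,x)$ to depend on time only through the time-to-go, which is exactly the shift-invariance of the level sets of $V$ that define $\mathcal{G}$. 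I expect this game-theoretic bookkeeping to be the only nontrivial point, the trajectory and predicate arguments being routine consequences of time-invariance.
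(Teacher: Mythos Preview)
Your argument is correct. The paper states Proposition~\ref{lem:shifting} without proof, treating it as an immediate consequence of the time-invariance of $f$ and of the predicate sets; your sketch supplies exactly the details the paper omits, including the one genuinely nontrivial point---that the time shift induces a bijection on non-anticipative disturbance strategies (equivalently, that the HJI Hamiltonian and terminal data carry no explicit $t$)---which is needed to transfer shift-invariance from open-loop trajectories to the game value defining $\mathcal{G}$.
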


% In the following we use $\psi_j$ instead of $\psi_{ij}$ for notation convenience. 
Additionally, note that $\psi_{i,j+1}$ precedes $\psi_{i,j}$ as we conduct BRS computations from the last to the first operator in $\Psi_i$. Let $I_{i,j}$ and $p_{i,j}$ be the interval and predicate of $\psi_{i,j}$. Lastly, remember we have $M$ subformulae and subformula $\Psi_i$ has $N_i$ temporal operators.

\subsection{Disturbance-Robustness on Specification Level}
In the following, we look for the subformula with the maximal disturbance robustness and obtain the corresponding hybrid controller. This is equal to finding the maximal disturbance robustness for $\phi$ due to its disjunctive normal form.

\begin{algorithm}
\caption{Maximal Disturbance-robust Control Synthesis}\label{alg:cap}
\begin{algorithmic}[1]
\Require $\dot{x}=f(x,u,d)$, $\phi$, $\mathcal{D}_0$
\State Order subformulae $\Psi_i$ backward in time
% \State Append $\Box_{[t_0,t_0]}\mathcal{X}_0$ and $\Box_{[t_f,t_f]}\mathcal{X}_f$ if required
\State $\mathcal{D}^* \gets \varnothing, K^* \gets \varnothing$
\State $\mathcal{D}^{\mathit{unresolved}} \gets \varnothing, K^{\mathit{unresolved}} \gets \varnothing$
\For{$i \in \{1,...,M\}$}
    \State $\mathcal{D}_{\Psi_i} \gets \mathcal{D}_0$
    \For{$j \in \{1,...,N_i-1\}$}

        \State $\mathcal{D}_{\psi_{i,j}}, K_{\psi_{i,j}} \gets$ Alg.~\ref{alg:always},~\ref{alg:eventually},~\ref{alg:until}$(\mathcal{D}_{\Psi_i},\psi_{i,j},\psi_{i,j+1})$

        % \If{$\mathcal{D}_{\psi_i} = \varnothing$} % or $\max(|\mathcal{D}_{\psi_i}|) \subseteq \mathcal{D}^*$
        %     \State break
        % \EndIf
        \State \textbf{if} $\mathcal{D}_{\psi_{i,j}} = \varnothing$ \textbf{then} break
        
        \If{$\psi_{i,j+1}  = \Box$ or $j = N_i-1$}
            \State $\mathcal{D}_{i,j} \gets$ Eq.~\eqref{eq:resolve}$(\mathcal{D}^{\mathit{unresolved}})$
            \State append respective $K_{i,j}$'s to $K_{\Psi_i}$
            \State \textbf{if} $\mathcal{D}_{i,j} \subseteq \mathcal{D}_{\Psi_i}$ \textbf{then} $\mathcal{D}_{\Psi_i} \gets \mathcal{D}_{i,j}$
            \State $\mathcal{D}^{\mathit{unresolved}} \gets \varnothing$, $K^{\mathit{unresolved}} \gets \varnothing$
        \Else
            \State add $\mathcal{D}_{\psi_{i,j}}, K_{\psi_{i,j}}$ to $\mathcal{D}^{\mathit{unresolved}}, K^{\mathit{unresolved}}$
        \EndIf
        
    \EndFor
    \State $(\mathcal{D}^*,K^*) \gets (\mathcal{D}_{\Psi_i},K_{\Psi_i})$ if $\mathcal{D}_{\Psi_i} \supseteq \mathcal{D}^*$
    % \If{$\mathcal{D}_{\Psi_i} \supseteq \mathcal{D}^*$}
    %     \State $\mathcal{D}^* \gets \mathcal{D}_{\Psi_i}$
    %     \State $K^* \gets K_{\Psi_i}$
    % \EndIf
    % \State $\mathcal{D}^* \gets \max(\mathcal{D}^*,\mathcal{D}_{\Psi_i})$
\EndFor

\end{algorithmic}
\end{algorithm}

Alg.~\ref{alg:cap} indicates how we obtain the maximally disturbance-robust hybrid controller $K^*$ (to the accuracy of $\epsilon$) given an STL specification from Def.~\ref{def:stl}.
We require an overapproximation of the true maximum disturbance set $\mathcal{D}_0$ from which we start iterative HJI computations.
% We initialize the optimal disturbance set and controller for $\phi$, $\mathcal{D}^*$ and $K^*$, as empty (\emph{line 2}). 
We also keep track of disturbance sets and controllers for operators for which the temporal flexibility has not yet been resolved (\emph{line 3}).
We consider each subformula $\Psi_i$ (\emph{line 4}), initialize a new initial disturbance-robustness (\emph{line 5}) and loop through its temporal operators, $\psi_{i,j}$, backward in time (\emph{line 6}). 
For each temporal operator an algorithm, later described in Alg.~\ref{alg:always},~\ref{alg:eventually}, or~\ref{alg:until} (for $\Box$, $\diamondsuit$, or $\mathcal{U}$ respectively), we obtain maximally disturbance-robust controllers and the maximum disturbance sets, $\mathcal{D}_{\psi_{i,j}} \subseteq \mathcal{D}_{\Psi_i} \subseteq \mathcal{D}_0$. 
These controllers and sets are obtained as a function of the current maximum permissible disturbance set found in $\Psi_i$, $\mathcal{D}_{\Psi_i}$ and the current and preceding operators $\psi_{i,j}$, $\psi_{i,j+1}$ (\emph{line 7}). 
If no valid disturbance set exists, we discard the subformula (\emph{line 8}).
We will see later in Alg.~\ref{alg:always},~\ref{alg:eventually}, and~\ref{alg:until} that if either $\psi_{i,j}$ or $\psi_{i,j+1}$ are operators with temporal flexibility, we return sets of controllers and disturbance sets that satisfy the predicates of the operators at some point in the intervals.

\subsubsection{Resolving Temporal Flexibility}
% \textcolor{red}{TODO: line 7 returns sets of sets if temporal flexibility.}
For a sequence of \emph{Eventually} and \emph{Until} operators we need to resolve temporal flexibility (e.g. determine at which $t' \in I$, $p$ should hold to satisfy $\diamondsuit_{I}p$). We can do this when we encounter an \emph{Always} operator (which requires satisfaction of the predicate at a strict time, $\bar{I}$) or we have assessed the last operator (\emph{line 9}).
We resolve temporal flexibility with a search problem that finds the maximum joint disturbance set $\mathcal{D}_{\Psi_i}$ while ensuring strict temporal concatenations of the BRSs within the operator's time intervals.
% \textcolor{red}{TODO: elaborate and fix equation.}
\begin{equation}
\begin{aligned}
\label{eq:resolve}
    \mathcal{D}_{\Psi_i} = &\min(\max(\mathcal{D}_{\psi_{i,k}}),...,\max(\mathcal{D}_{\psi_{i,k+}})), \\
    s.t. \quad &t_{i,j}^\mathit{departure} = t_{i,j+1}^\mathit{arrival}, \quad j=r,\ldots,s-1, \\
    &t_{i,j}^\mathit{departure} \in I_{i,j},
\end{aligned}
\end{equation}
where $\{r,...,s\}$ index the temporal operators that need to be resolved (from \emph{line 15}). $\mathcal{D}_{\psi_{i,j}} = \{\mathcal{D}^1,...,\mathcal{D}^m\}$ denotes all disturbance sets that satisfy $p_{i,j}$ for some $t \in I_{i,j}$. With some liberty in notation, $t_{i,j}^\mathit{departure}$ and $t_{i,j}^\mathit{arrival}$ denote, in absolute terms, the start and end time of BRS for $\psi_{i,j}$, capturing where in time the BRS will be placed and for how long the corresponding controller will be active. 
From Eq.~\eqref{eq:resolve} we obtain the maximum disturbance set for all resolved operators. The corresponding controllers are appended to the set of controllers for the subformula, $K_{\Psi_i}$ (\emph{line 11}) and the maximum disturbance set is lowered if required (\emph{line 12}). If $\Psi_i$ is a sequence of $\Box$ operators, we call Eq.~\eqref{eq:resolve} after every operator.
% \begin{example}
%     For $\Psi_1 = \Box_{[t_0,t_0]} x\in \mathcal{X}_0 \land \diamondsuit_{I_1} x \in A \land \Box_{[t_f,t_f]} x \in \mathcal{X}_f$ as per Fig.~\ref{fig:algorithm}b. The computation for $\Box_{[t_f,t_f]} x\in \mathcal{X}_f$ considers multiple intersections as $\psi_{I_{i+1}} = \diamondsuit_{I_1}$ with temporal flexibility. Similarly, the computation for $\diamondsuit_{I_1} x\in B$ considers multiple solutions due to its own temporal flexibility. Eq.~\eqref{eq:resolve} finds the concatenation of BRSs with the maximum disturbance set, ensuring the concatenation during interval $I_1$.
% \end{example}
As $x(t) \in B \centernot\implies \exists u \in \mathcal{U}, s.t. \hquad \exists t' > t, x(t') \in B, \forall d \in \mathcal{D}$, arrival and departure times should strictly coincide. 
We keep track of the maximal disturbance set of $\phi$, $\mathcal{D}^*$ (\emph{line 16}).

\subsection{Disturbance-Robustness on Operator Level}
% \textcolor{red}{somewhere we need to add that we can either take the whole predicate for intersection, or we take a slice (if e.g. one dimension does not matter for the satisfaction).}
% \textcolor{red}{TODO: always operator can be considered over multiple time instances if an Eventually operator came before (as we are talking about relative time reachability). This is implemented in the code, but not in the algorithms. Compared to~\cite{yu2021online}, we have a simpler approach. We need to be clear on why and how.}
%Assume we have parsed the STL tree such that all distinct paths have been found. 
% Consider a subformula $\Psi_i = \bigwedge_{j=1}^{M_i} \psi_{ij}$. 
For notation convenience, we drop index $i$ and denote $\psi_{j}$ as a temporal operator in subformula $\Psi_i$. Subsequently, we use $p_{j}$ and $I_{j}$ for the predicate and interval.
% where $\psi \in \{\Box,\diamondsuit,\mathcal{U}\}$. 
For the concatenations of BRSs associated with these temporal operators, the initial state of the BRS must be completely contained in the final state of the preceding BRS; $\mathcal{G}_{j+1}(T) \supseteq \mathcal{G}_{j}(0)$ (as we sort $\Psi_i$ backward in time). 
If this is the case for disturbance sets $\mathcal{D}_{\psi_{j}}$ and $\mathcal{D}_{\psi_{j+1}}$, then it is guaranteed to hold $\forall \mathcal{D} \subseteq \mathcal{D}_{\psi_{j}} \cap \mathcal{D}_{\psi_{j+1}}$ as per Alg.~\ref{alg:cap}.

% First, we state the following lemma
% finding the permissible disturbance set and controller for
\subsubsection{Always Operator}
The procedure for the \emph{Always} operator is described in Alg.~\ref{alg:always}. We differentiate between the preceding operator $\psi_{j+1}$ (for which $\bar{I}_{j+1} < \underline{I}_{j}$) being an \emph{Eventually} or \emph{Until} operator (which has temporal flexibility) or an \emph{Always} operator (which does not). 
If the preceding operator has temporal flexibility, we conduct iterative HJI reachability computations (\emph{line 6}) with a decreasing disturbance set $\mathcal{D}$ (\emph{line 10}) until the BRS completely intersects the preceding predicate $\psi_{j+1}, \forall t \in I_{j+1}$, or the disturbance set is smaller than a feasible disturbance set which has been found for another subformula (\emph{line 5}). We keep track of all BRSs for which $\exists t \in I_{j+1}$ that completely intersects the preceding predicate $p_{j+1}$ (\emph{line 7 - 9}) for resolving with Eq.~\eqref{eq:resolve}. 
If the preceding operator has no temporal flexibility, we conduct the computations (\emph{line 11}), with decreasing $\mathcal{D}$, until the reachable set intersects the predicate of the preceding operator at $t = \bar{I}_{j+1}$ (\emph{line 12}).
To ensure the stay-in objective for the duration $\Delta I_{j}$, we add the obstacle $\neg p_{j}$ for the duration $\Delta I_{j}$ at the start of the HJI computation (\emph{line 6 and 14}). 
An intuitive example is shown in Fig.~\ref{fig:algorithm}a.

\begin{algorithm}[htb]
\caption{Disturbance-robust Always Operator}\label{alg:always}
\begin{algorithmic}[1]
\Require $\mathcal{D}_0$, $\Box_{I_{j}}p_{j}$, $\psi_{j+1}$
\State $\mathcal{D}_j \gets \mathcal{D}_0$
\State $\mathcal{D}_{\Box_j} \gets \varnothing$, $K_{\Box_j} \gets \varnothing$, $\mathcal{G}_{\Box_j} \gets \varnothing$

\If{$\psi_{j+1} \in \{\mathcal{U},\diamondsuit\}$}
    \While{$\exists t' \in I_{j+1}$ for which $\mathcal{G}_j(t') \not\supseteq \mathcal{X}_{p_{j+1}}$}

        \State \textbf{if} $\mathcal{D}_j \subseteq \mathcal{D}^*$ or $\mathcal{D}_j = \varnothing$ \textbf{then} return $\varnothing$
        
        \State $K_j,\mathcal{G}_j(t) \gets \mathcal{G}(\bar{I}_{j+1},p_{j},\mathcal{D}_j)$ with $\neg p_j \forall t \in I_{j}$

        \For{$t' \in I_{j+1}$ for which $\mathcal{G}_j(t') \supseteq \mathcal{X}_{p_{j+1}}$}
            \State append $\mathcal{D}_j$ to $\mathcal{D}_{\Box_j}$
            \State append $K_j(t':,x)$ to $K_{\Box_j}$, $\mathcal{G}_j(t':,x)$ to $\mathcal{G}_{\Box_j}$
        \EndFor
        \State $\mathcal{D}_j \gets \mathcal{D}_j - \epsilon$
    \EndWhile
\Else
    \While{$\mathcal{G}_j(\bar{I}_{j+1}) \not\supseteq \mathcal{X}_{p_{j+1}}$}

        \State \textbf{if} $\mathcal{D}_j \subseteq \mathcal{D}^*$ or $\mathcal{D}_j = \varnothing$ \textbf{then} return $\varnothing$
        
        \State $K_j,\mathcal{G}_j \gets \mathcal{G}(\bar{I}_{j+1},p_{j},\mathcal{D}_j)$ with $\neg p_j \forall t \in I_{j}$
        \State $\mathcal{D}_j \gets \mathcal{D}_j - \epsilon$
    \EndWhile
    \State append $\mathcal{D}_j$ to $\mathcal{D}_{\Box_j}$, $K_j$ to $K_{\Box_j}$, $\mathcal{G}_j$ to $\mathcal{G}_{\Box_j}$
\EndIf 

\State return $\mathcal{D}_{\Box_j}$, $K_{\Box_j}$, $\mathcal{G}_{\Box_j}$
\end{algorithmic}
\end{algorithm}

%for finding the permissible disturbance set and controller
\subsubsection{Eventually Operator}
The \emph{Eventually} operator has temporal flexibility, regardless of the type of operator that proceeds. The procedure is described in Alg.~\ref{alg:eventually}. 
We convert the \emph{Eventually} operator to an \emph{Always} operator with interval $[\bar{I}_{j},\bar{I}_{j}]$.
If the preceding operator has temporal flexibility, the total temporal flexibility is over the interval $[\underline{I}_{j+1},\bar{I}_{j+1}+\Delta I_{j}]$ (\emph{line 3}), otherwise merely over $[\bar{I}_{j+1},\bar{I}_{j+1}+\Delta I_{j}]$ (\emph{line 5}) according to Prop.~\ref{lem:shifting}. An intuitive example of the temporal flexibility of the \emph{Eventually} operator is shown in Fig.~\ref{fig:algorithm}b.
% The timing constraint can be met at any $t' \in I$. We therefore consider all funnels $\mathcal{F}_{\mathcal{D}}(t)$ for which $\exists t\in I, \mathcal{F}_{\mathcal{D}}(t') \cap \mathcal{X}_0 \supseteq \mathcal{X}_0$ (\emph{line 3 and 6}). Instead of returning a single disturbance set and BRS, we hence return a collection of sets and BRSs (\emph{line 10 and 11}). We will later see how the temporal flexibility is resolved.

\begin{algorithm}[!htb]
\caption{Disturbance-robust Eventually Operator}\label{alg:eventually}
\begin{algorithmic}[1]
\Require $\mathcal{D}_0$, $\diamondsuit_{I_{j}}p_{j}$, $\psi_{j+1}$
\State $\psi_{j} \gets \Box_{[\bar{I}_{j},\bar{I}_{j}],j}p_{j}$

\If{$\psi_{j+1} \in  \{\mathcal{U},\diamondsuit\}$}
    \State $\psi_{j+1} \gets \diamondsuit_{[\underline{I}_{j+1},\bar{I}_{j+1}+\Delta I_{j}]}p_{j+1}$
\Else
    \State $\psi_{j+1} \gets \diamondsuit_{[\bar{I}_{j+1},\bar{I}_{j+1}+\Delta I_{j}]}p_{j+1}$
    
\EndIf
\State $\mathcal{D}_{\diamondsuit_{j}}, K_{\diamondsuit_{j}}, \mathcal{G}_{\diamondsuit_{j}} \gets$ Alg.~\ref{alg:always}$(\mathcal{D}_0,\psi_{j},\psi_{j+1})$

\State return $\mathcal{D}_{\diamondsuit_{j}}$, $K_{\diamondsuit_{j}}$, $\mathcal{G}_{\diamondsuit_{j}}$%, $\mathcal{F}$
\end{algorithmic}
\end{algorithm}

% for finding the permissible disturbance sets and controllers
\subsubsection{Until Operator}
For the \emph{Until} operator, there is temporal flexibility and dependency on two bounded predicates. The procedure is described in Alg.~\ref{alg:until}. We conduct two HJI reachability computations (\emph{line 4 and 8}): assessing backward reachability from $\mathcal{X}_{p_2}$ to $\mathcal{X}_{p_1}$ and from $\mathcal{X}_{p_1}$ to $\mathcal{X}_{p_{j+1}}$. Again, we return a collection of disturbance sets and BRSs. An example of BRSs for the \emph{Until} operator is shown in Fig.~\ref{fig:algorithm}c. The implementation of the first predicate satisfaction introduces conservatism as the $\Box_{[\underline{I}_{j},\max(t')]}$ operator (\emph{line 6}) requires stay-in behavior for a duration which might be longer than strictly necessary.

\begin{algorithm}[!h]
\caption{Disturbance-robust Until Operator}\label{alg:until}
\begin{algorithmic}[1]
\Require $\mathcal{D}_0$, $p_1\mathcal{U}_{I}p_2$, $\psi_{j+1}$
\State $\mathcal{D}_{2,j} \gets \mathcal{D}_0$
\State $\mathcal{D}_{\mathcal{U}_{j}} \gets \varnothing$, $K_{\mathcal{U}_{j}} \gets \varnothing$, $\mathcal{G}_{\mathcal{U}_{j}} \gets \varnothing$

\While{$\exists t' \in I_{j}$ for which $\mathcal{G}_2(t') \not\supseteq \mathcal{X}_{p_{1}}$}

    % \If {$\mathcal{D}^j_2 \subseteq \mathcal{D}^*$ or $\mathcal{D}^j_2 = \varnothing$}
    %     \State return $\varnothing$
    % \EndIf
    \State \textbf{if} $\mathcal{D}_{2,j} \subseteq \mathcal{D}^*$ or $\mathcal{D}_{2,j} = \varnothing$ \textbf{then} return $\varnothing$
    \State $K_2, \mathcal{G}_2 \gets \mathcal{G}(\underline{I}_j,p_2,\mathcal{D})$%, Eq.~\eqref{eq:funnel}

    \If {$\exists t'\in I_j, \mathcal{G}_2(t') \supseteq \mathcal{X}_{p_{1}}$}
        \State $\psi_{j} \gets \Box_{[\underline{I}_j,\max(t')]}p_1$
        \While{$\exists t'' \in I_{j+1}$ for which $\mathcal{G}_1(t'') \not\supseteq \mathcal{X}_{p_{j+1}}$}
        \State $\mathcal{D}_1,K_1,\mathcal{G}_1(t) \gets$ Alg.~\ref{alg:always}$(\mathcal{D}_{2,j},\psi_{j}, \psi_{j+1})$%_{I_{j+1}}p_{j+1})$

            \For {$(\mathcal{D}_{1,j},K_{1,j},\mathcal{G}_{1,j}) \in (\mathcal{D}_1,K_1,\mathcal{G}_1) $}
                \If {$\exists t'' \in I_{j+1}, \mathcal{G}^1_j(t'') \supseteq \mathcal{X}_{p_{j+1}}$}
                    \State append $(\mathcal{D}_{1,j} \cap \mathcal{D}_{2,j})$ to $\mathcal{D}_{\mathcal{U}_j}$, 
                    \State append $(K_{1,j},K_2)$ to $K_{\mathcal{U}_j}$, 
                    \State append $(\mathcal{G}_{1,j},\mathcal{G}_2)$ to $\mathcal{G}_{\mathcal{U}_j}$
                \EndIf
            \EndFor
        \EndWhile
    \EndIf
    \State $\mathcal{D}_{2,j} \gets \mathcal{D}_{2,j} - \epsilon$
\EndWhile

\State return $\mathcal{D}_{\mathcal{U}_j}$, $K_{\mathcal{U}_j}$, $\mathcal{G}_{\mathcal{U}_j}$
\end{algorithmic}
\end{algorithm}

Due to the temporal flexibility of $\diamondsuit_I$ and $\mathcal{U}_I$, we are interested in all $\mathcal{D}_i$ for which the respective BRSs that at some time intersect the predicate within interval $I$. This prevents us from using a golden section method to converge to the optimal solution. Eq.~\eqref{eq:resolve} resolves this temporal flexibility.

\begin{remark}
    In the context of Alg.~\ref{alg:always},~\ref{alg:eventually}, and~\ref{alg:until}, the evaluation of BRSs against predicates encompasses the entirety of the defined bounded predicate space (e.g. \emph{line 4} in Alg.~\ref{alg:always}). However, when the analysis focuses on specific dimensions (e.g., spatial coordinates excluding orientation), it is permissible and often practical to perform intersection checks between BRSs and relevant predicate slices. These intersecting slices should be incorporated into the subsequent computation of BRSs for the relevant disturbance set.
\end{remark}

% \begin{remark}
%     Intersection of the BRSs with predicates in Alg.~\ref{alg:always},~\ref{alg:eventually}, and~\ref{alg:until} is defined over the entire bounded predicate. If the interest lies in limited dimensions, intersection can be checked for slices of the BRS and predicate. This slice should then be used in the preceding BRS computation for the relevant disturbance set.
% \end{remark}
% \textcolor{red}{TODO: add that you can take slices as well.}

% \begin{align}
%     \max_{\mathcal{D}} \quad &|\mathcal{D}| \label{eq:flex_prob}\\
%     \textrm{s.t.} \quad & \Phi_f(t;x_0,K,\mathcal{D}) \models \phi \tag{17a}\label{eq:flex_prob_c1}
% \end{align}

\subsection{Analysis}
We wish to ascertain that the procedure described in Algorithm.~\ref{alg:cap} is sound, which we state as follows.
\begin{theorem}
    Using Algorithm.~\ref{alg:cap}, the resulting set of hybrid controllers $K^*$ is disturbance-robust to any strategy from $\mathcal{D}^*$ while satisfying specification $\phi$.
\end{theorem}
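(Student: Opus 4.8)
The plan is to prove soundness in three layers --- operator, concatenation, and specification --- and then invoke Definition~\ref{def:dist_robust_STL}. Since $\phi = \bigvee_{i=1}^{M}\Psi_i$, satisfying any single conjunctive subformula suffices; as Algorithm~\ref{alg:cap} returns the pair $(\mathcal{D}^*,K^*)$ corresponding to the subformula $\Psi_{i^*}$ with the largest admissible disturbance set (\emph{line 16}), it is enough to show that $\Phi_f(t;\mathcal{X}_0,K^*,\mathcal{D}^*)\models \Psi_{i^*}$ for all $t\in[t_0,t_f]$, which then implies satisfaction of $\phi$.

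First I would establish operator-level soundness. For each $\psi_{i,j}$, Algorithms~\ref{alg:always}--\ref{alg:until} terminate only once the computed BRS $\mathcal{G}_{i,j}$ passes the relevant containment test (e.g.\ $\mathcal{G}_j(\bar{I}_{j+1})\supseteq \mathcal{X}_{p_{j+1}}$ for \emph{Always} with a strict predecessor). By the HJI guarantee in Eqs.~\eqref{eq:BRS}--\eqref{eq:opt_control}, any state in the BRS can be driven to the target at the prescribed time under the optimal controller $K_{i,j}$ against \emph{every} non-anticipative disturbance in $\mathcal{D}_{\psi_{i,j}}$; the added obstacle $\neg p_j$ over $\Delta I_j$ (\emph{lines 6 and 14} of Alg.~\ref{alg:always}) upgrades this reachability guarantee to the stay-in requirement of $\Box_{I_j}p_j$, and the conversions in Alg.~\ref{alg:eventually} and the double computation in Alg.~\ref{alg:until} reduce $\diamondsuit$ and $\mathcal{U}$ to this same certified form. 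I would record each operator's guarantee as a statement about the closed-loop funnel $\mathcal{F}(t)$ of Eq.~\eqref{eq:funnel}: starting in $\mathcal{G}_{i,j}(0)$ under $K_{i,j}$, the funnel satisfies $\psi_{i,j}$ and terminates inside a known target set.

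Next I would carry out the concatenation argument by backward induction over $j=N_i,\dots,1$. The termination tests enforce $\mathcal{G}_{j+1}(T)\supseteq \mathcal{G}_j(0)$, so the funnel produced by the earlier operator $\psi_{i,j+1}$ lands inside the domain from which $\psi_{i,j}$'s controller is certified; Proposition~\ref{lem:shifting} lets me place each BRS at the correct absolute time, and Eq.~\eqref{eq:resolve} fixes the resolved operators' departure and arrival times to coincide exactly ($t^{\mathit{departure}}_{i,j}=t^{\mathit{arrival}}_{i,j+1}\in I_{i,j}$), which is necessary since $x(t)\in B$ does not by itself imply the trajectory can be kept in $B$. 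Assumption~\ref{ass:intervals} guarantees these windows do not interfere, and Assumption~\ref{ass:predicates} keeps every target a well-defined bounded set so the reach-avoid games are well posed. The crucial monotonicity step is that each local guarantee established for $\mathcal{D}_{\psi_{i,j}}$ persists for every $\mathcal{D}\subseteq \mathcal{D}_{\psi_{i,j}}$; because Algorithm~\ref{alg:cap} only ever intersects and shrinks, the final $\mathcal{D}^*=\mathcal{D}_{\Psi_{i^*}}\subseteq\bigcap_j \mathcal{D}_{\psi_{i,j}}$, so all operator and concatenation guarantees hold \emph{simultaneously} under the single set $\mathcal{D}^*$. Composing the induction then yields that the hybrid controller $K^*$, switching between the $K_{i,j}$ at the resolved times, keeps every closed-loop trajectory from $\mathcal{X}_0$ inside the chained funnels and hence satisfies $\Psi_{i^*}$ against all $d\in\mathcal{D}^*$, which by Definition~\ref{def:dist_robust_STL} is the claim.

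I expect the main obstacle to be precisely this concatenation/monotonicity step: promoting the per-operator HJI guarantees --- each valid only over its own disturbance set and its own time window --- into a single trajectory-level invariant under one shared disturbance set, while correctly accounting for the switching instants and the stay-in obstacles. This hinges on arguing that the game value of Eq.~\eqref{eq:HJI} is monotone in $\mathcal{D}$ (shrinking $\mathcal{D}$ can only enlarge the BRS, since the inner $\min_{d\in\mathcal{D}}$ ranges over a smaller set) and that containment of reachable sets is therefore preserved under shrinking $\mathcal{D}$, so that the instantaneous informational advantage granted to the non-anticipative disturbance player never breaks the chain of inclusions certified at the larger sets.
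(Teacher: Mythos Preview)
Your proposal is correct and follows essentially the same approach as the paper's own proof: reduce the disjunction to the best subformula returned by Algorithm~\ref{alg:cap}, then establish the conjunction by chaining the HJI-certified BRSs backward in time under Assumptions~\ref{ass:predicates}--\ref{ass:intervals}, using Eq.~\eqref{eq:resolve} to align switching times and the inclusion $\mathcal{D}^*\subseteq\mathcal{D}_{\psi_{i,j}}$ to make every local guarantee hold simultaneously. The paper merely organizes this as an exhaustive case split over all six predecessor/successor operator pairings rather than your cleaner three-layer induction with an explicit monotonicity-in-$\mathcal{D}$ argument, but the substantive reasoning is the same.
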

\begin{proof}
    % \textcolor{red}{Idea: first provide proof for $\Box \rightarrow (\diamondsuit,\mathcal{U}$), $\Box \rightarrow \Box$, $(\diamondsuit,\mathcal{U}) \rightarrow \Box$, and $(\diamondsuit,\mathcal{U})\rightarrow(\diamondsuit,\mathcal{U})$, and then the proof for the concatenations and subformulae in Alg.~\ref{alg:cap}}
    % As $\phi$ is in disjunctive normal form and with Assumptions~\ref{ass:predicates} and~\ref{ass:intervals}, we will treat the cases of consecutive operators separately.
    First, note that Alg.~\ref{alg:cap} returns $\mathcal{D}^*$ and hybrid controller $K^*$ from the optimal subformula $\Psi_j \in \phi$. This takes care of any disjunction in $\phi$.
    For the conjunction, we consider the concatenation of the different temporal operators, given Assumptions~\ref{ass:predicates} and~\ref{ass:intervals}, as follows.
    
    For $\Box_{I_j}p_j$ preceded by $\Box_{I_{j+1}}p_{j+1}$, the inclusion of the \emph{obstacle} $\neg p_j, \forall t \in I_j$ and the BRS computation until $\bar{I}_{j+1}$ for disturbance-level $\mathcal{D}^*, s.t.\quad \mathcal{G}_j(\bar{I}_{j+1}) \supseteq \mathcal{X}_{p_{j+1}}$ implies that $\forall x(\bar{I}_{j+1}) \in \mathcal{X}_{p_{j+1}}, x(t') \in \mathcal{X}_{p_j}, \forall t' \in I_j, \forall d(\cdot) \in \mathcal{D}_{\psi_{j}} \supseteq \mathcal{D}^*$. 
    For $\Box_{I_j}p_j$ preceded by $(\diamondsuit \lor \mathcal{U})$, we take this same approach but collect all BRSs, $K$s and $\mathcal{D}$'s for which $\exists x(t') \in \mathcal{X}_{p_{j+1}}, t' \in I_{j+1}$ which ensures that $x(t'') \in \mathcal{X}_{p_j}, \forall t'' \in I_j, \forall d(\cdot) \in \mathcal{D}_{\psi_{j}} \supseteq \mathcal{D}^*$ with Eq.~\eqref{eq:resolve}. 
    For $\diamondsuit_{I_j}p_j$ preceded by $\Box_{I_{j+1}}p_{j+1}$, note that again, we collect all BRSs $s.t.\hquad x(\bar{I}_{j+1}) \in \mathcal{X}_{p_{j+1}} \implies \exists t\in I_j, s.t.\hquad x(t) \in \mathcal{X}_{p_j}, \forall d(\cdot)\in \mathcal{D}_{\psi_{j}} \supseteq \mathcal{D}^*$ with help of Lemma.~\ref{lem:shifting}.
    For $\diamondsuit_{I_j}p_j$ preceded by $(\diamondsuit \lor \mathcal{U})$, we collect all BRSs, $K$s and $\mathcal{D}$s $s.t.\hquad \exists t' \in I_{j+1}, s.t.\hquad x(t') \in \mathcal{X}_{p_{j+1}} \implies \exists t'' \in I_{j}, s.t.\hquad x(t'') \in p_{j}$. Eq.~\eqref{eq:resolve} then ensures satisfaction under  $\mathcal{D}_{\psi_{j}} \supseteq \mathcal{D}^*$.
    For $p_{j,1} \mathcal{U}_{I_j} p_{j,2}$ preceded by $\Box_{I_{j+1}}p_{j+1}$, note that we obtain $(K_1,K_2)$. $K_1$ ensures that, $\forall x(\bar{I}_{j+1}) \in \mathcal{X}_{p_{j+1}}, x(\underline{I}_j) \in \mathcal{X}_{p_{j,1}}$ and $\exists t'\geq \underline{I}_j s.t.\hquad x(t)\in \mathcal{X}_{p_{j,1}}, \forall t \in [\underline{I}_j,t'], \forall d(\cdot) \in  \mathcal{D}_{\psi_{j}} \supseteq \mathcal{D}^*$ with Eq.~\eqref{eq:resolve}. $K_2$ ensures that $\exists t' \in I_j, s.t.\hquad x(t) \in p_{j,1} \forall t \in [\underline{I}_j,t']$ and $\exists t'' \in [t',\bar{I}_j] s.t.\hquad x(t) \in p_{j,2} \forall t \in [t',t''], \forall d(\cdot) \in \mathcal{D}_{\psi_{j}} \supseteq \mathcal{D}^*$ with Eq.~\eqref{eq:resolve}.
    For $p_{j,1} \mathcal{U}_{I_j} p_{j,2}$ preceded by $(\diamondsuit \lor \mathcal{U})$, $K_1$ ensures that $\exists t \in I_{j+1}, s.t.\hquad x(t) \in \mathcal{X}_{p_{j+1}}, x(\underline{I}_j) \in \mathcal{X}_{p_j}$. The rest follows according to the previous case.
    Eq.~\eqref{eq:resolve} ensures that for any operator sequence with temporal flexibility, this is under the maximum possible disturbance set $\mathcal{D}_{\psi_j} \supseteq \mathcal{D}_{\Psi_i}$ while temporal flexibility is resolved for any $t' \in I_j$.
    By construction of Alg.~\ref{alg:cap}, and specifically (\emph{line 13}) and Eq.~\eqref{eq:resolve}, we ensure that $\mathcal{D}^* \subseteq \mathcal{D}_{\Psi_i} \subseteq \mathcal{D}_{\psi_{j}}, \forall j \in \{1,...,N_i\}$ and hence subformula $\Psi_i$ can be satisfied $\forall x(0) \in \mathcal{X}_0$ under hybrid control law $K^*(t,x)$.
    % Given the disjunctive normal form and the assumptions, this concludes the proof for the fragment in Def.~\ref{def:stl}.

    % These cases encompass the proof for the conjunction operator, $\land$, due to the disjunctive normal form and the assumptions. Additionally, for the disjunctive normal form the disjunction operator $\lor$ is trivial. This concludes the proof.
    % The proof follows directly from the construction of Algorithm.~\ref{alg:cap} and the sequential composability for all temporal operators in Algorithm.~\ref{alg:always},~\ref{alg:eventually}, and~\ref{alg:until} respectively.
\end{proof}

% \textcolor{red}{Some comments on the complexity and the approximation.}

\section{Results}
\label{sec:results}
We will show how Alg.~\ref{alg:cap} solves the maximal disturbance-robust STL control synthesis and simultaneously obtains an under-approximation of the maximal permissible disturbance-set, $\mathcal{D}^*$, and the hybrid controller $K^*(x,t)$.

All simulations are performed on an Intel Core 12700H processor with 32 Gb of RAM. The method is implemented in Python using the OptimizedDP package~\cite{bui2022optimizeddp}.

In our simulations, we consider a planar Autonomous Underwater Vehicle (AUV). The system might be subjected to considerable underwater currents that have not been assessed before. We consider the frontal and side surface area as a scaling of the disturbance on the $x$ and $y$ coordinates. Hence, consider the nonlinear dynamical system
\begin{equation*}
    \begin{bmatrix} \dot{x} \\ \dot{y} \\ \dot{v} \\ \dot{\theta} \end{bmatrix} = \begin{bmatrix} v \cos(\theta) + d_x(A_{front} \cos(\theta) + A_{side} \sin(\theta)) \\ v \sin(\theta) + d_y(A_{front}\sin(\theta) + A_{side}\cos(\theta)) \\ u_v \\ u_{\theta} \end{bmatrix},
\end{equation*}
where $d_x \subseteq \mathcal{D}_x \in \mathbb{R}$ and $d_y \subseteq \mathcal{D}_y \in \mathbb{R}$ are the disturbances in $x$ and $y$ dimension, $A_{front}=0.1$ and $A_{side}=1.0$ are the frontal and side surface area of the AUV respectively, $u_v \in [-4,4]$ is a control input to change the forward or backward velocity of the AUV, and $u_{\theta} \in [-1,1]$ is a control input to change the orientation of the AUV in the plane.

\subsection{Illustrative Example}
In Fig.~\ref{fig:intro}, an AUV is subjected to a vertical disturbance ($\mathcal{D}_x = \varnothing$) and specification $\phi = \Box_{[0,0]}(x\in \mathcal{X}_0) \land \Box_{[7,8]}(x \in A) \land \Box_{[15,15]}(x\in \mathcal{X}_f)$. The $\mathcal{D}_y$ disturbance and the underactuation of the AUV make it difficult to stay in a narrow corridor without leaving $A$ or colliding with the rocks. In contrast to space- and time-robust methods, our maximal disturbance-robust control synthesis ensures the desired behavior (being orthogonal to the rock due to the reduced area subject to the vertical disturbance) and quantifies the maximum disturbance that can be accommodated during $\phi$. 
We obtain $\mathcal{D}^* = [-1,1]$ with accuracy $\epsilon = 0.05$.
%With $\epsilon = 0.05$ and $\mathcal{D}_0 = [-2,2]$ we obtain $\mathcal{D}^* = [-1,1]$ in 246 seconds.

\begin{figure}[t!]
    \centering
    \includegraphics[width=0.45\textwidth]{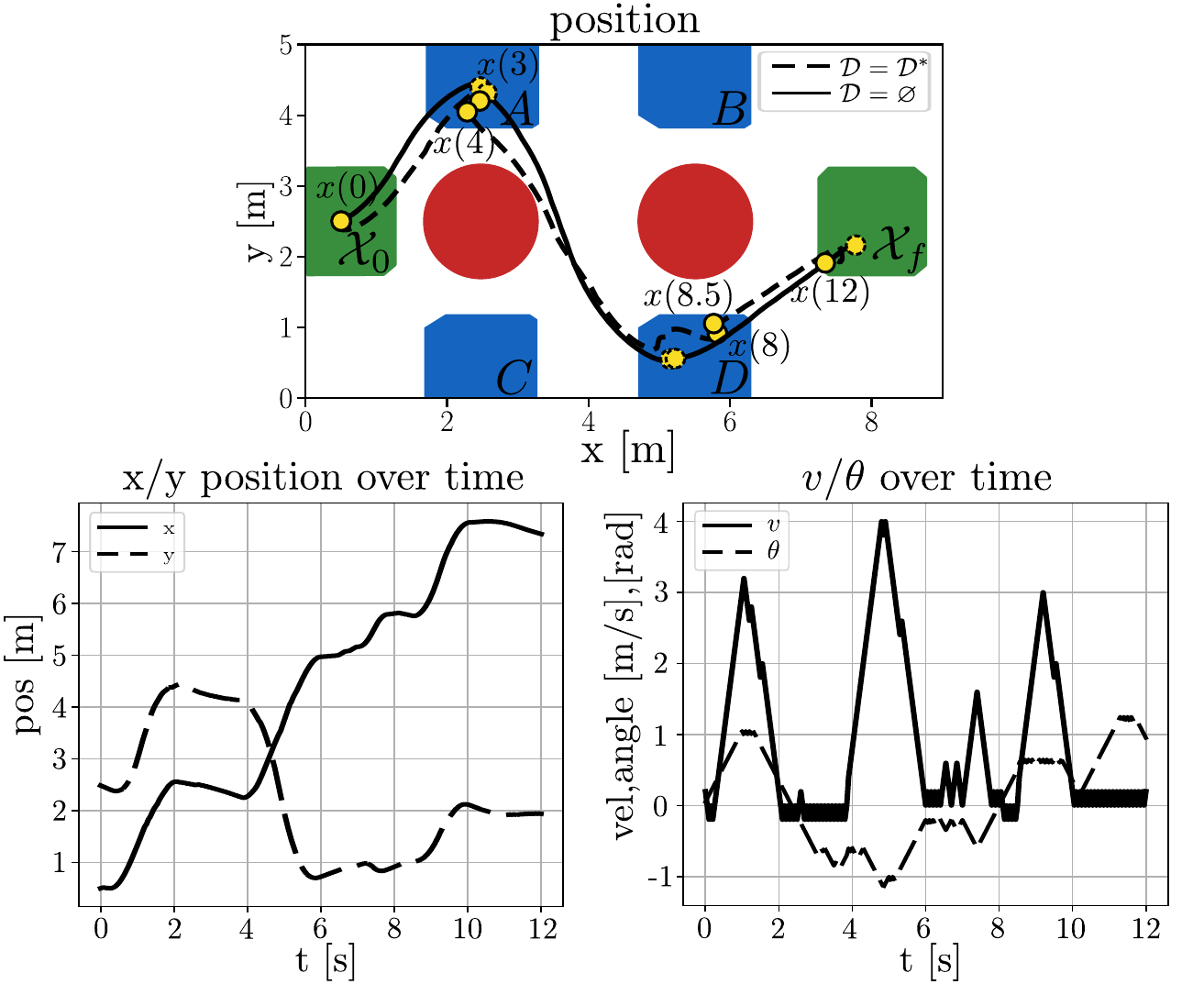}
    \caption{An AUV subjected to an STL specification after finding the maximum permissible disturbance. The system is guaranteed to satisfy $\phi$ $\forall \mathcal{D} \subseteq \mathcal{D}^*$. We show the maximally disturbed and undisturbed case.}
    \label{fig:results}
    \vspace{-0.5cm}
\end{figure}

\subsection{AUV Example}
\label{ssec:auv}
% \textcolor{red}{Example where the nominal trajectory is different than the time- and space-robust paths.}
We consider a more complex example where the goal is to find the controller and maximal disturbance sets $\mathcal{D}^*_x$ and $\mathcal{D}^*_y$ subjected to the inspection specification
\begin{equation*}
\begin{aligned}
    \phi = &(\diamondsuit_{[3,4]}(x\in A) \land \Box_{[8,8.5]}(x \in D)) \lor \\ &(\Box_{[4,4.5]}(x\in C) \land \diamondsuit_{[8,9]}(x\in B)),
\end{aligned}
\end{equation*}
with initial state $\Box_{[0,0]} (x(0) \in \mathcal{X}_0)$ and final state $\Box_{[12,12]}(x(t_f) \in \mathcal{X}_f)$. We define initial and final conditions as instantaneous \emph{Always} operators to be consistent with using Alg.~\ref{alg:cap}. The specification consists of two subformulas. 
% For each subformula, three maximally robust BRS computations are performed (multiple BRS computations over a discrete search-space).
\\\\
We obtain a maximal disturbance robustness degree of $\hat\delta_{\phi} = 0.3: \mathcal{D}^*_x = [-0.3,0.3], \mathcal{D}^*_y = [-0.3,0.3]$ with accuracy $\epsilon = 0.05 \frac{m}{s}$. Results are shown in Fig.~\ref{fig:results} which show the undisturbed trajectory (under $\mathcal{D} = \varnothing$) and the worst-case trajectory (under $\mathcal{D} = \mathcal{D}^*$) using the optimal control from Eq.~\eqref{eq:opt_control}.
The limiting factor in the maximally robust control synthesis is $\Box_{[8,8.5]}(x\in D)$ which requires a stay-in objective under significant side-ward disturbances (which cannot be directly counteracted). 
The temporal flexibility of the preceding operator ($\diamondsuit_{[3,4]}(x\in A)$) makes this realizable for $\hat\delta_{\Box_{[8,8.5]}} = 0.05$ (resolving at $t'=5$) and $\hat\delta_{\Box_{[8,8.5]}} = 0.3$ (resolving at $t'=4$), where Eq.~\eqref{eq:resolve} ensures the maximal disturbance set is obtained. 
In contrast, the maximal disturbance-robust degree to go from $\mathcal{X}_f$ to $D$ in 3.5 seconds is $\hat\delta_{\Box_{[12,12]}} = 0.5$. 
The second subformula has the maximal disturbance-robust degree of $\hat\delta_{\Psi_2} = 0.15$. 
Alg.~\ref{alg:cap} ensures termination of this subformula upon discovering a lower maximum. 
With $\mathcal{D}_0 = \{\mathcal{D}_{x,0},\mathcal{D}_{y,0}\} = \{[-0.65,0.65],[-0.65,0.65]\}$ our computation takes 93 seconds. 
\\\\
For all scenarios, the computation time is heavily dependent on $\epsilon$, $\mathcal{D}_0$, the grid sizes, and the ordering of the subformulae.

% \textcolor{red}{Some additional words on capabilities and limitations}

\section{Conclusions}
\label{sec:conclusions}
We have extended upon semantics of Signal Temporal Logic that quantify the disturbance-robust capabilities of a dynamical system subjected to an STL specification. 
Additionally, we presented an algorithm that synthesizes the hybrid controller that satisfies the STL specification under the maximal possible disturbance set. 
% To the best of the author's knowledge, this is the first work where such a result is presented. 
We have shown the soundness of our approach and shown its capabilities in a simulation study.

Future work will involve expanding the fragment of STL while ensuring computational tractability and using learning-based heuristics and sound under approximations for faster approximate solutions.
% , and considering extensions to multi-agent systems.
% \textcolor{red}{multi-agent systems, using invariance to get approximate solutions, generalize the fragment of STL, combine with the timed-temporal logic tree in~\cite{yu2021online}, etc.}
% \input{4_Parametrization}
% \input{5_Predicate_ATR}
% \input{6_MILP_encoding}
% \input{7_Results}
% \input{8_Conclusions}

% \clearpage
\bibliographystyle{IEEEtran}
\bibliography{references}

\end{document}